\declaretheorem{theorem,definition,corollary,lemma,proposition,example,assumption}
\algnewcommand{\LeftComment}[1]{\Statex \(\triangleright\) #1}
\newcommand{\bA}{\mathbf{A}}
\newcommand{\bS}{\mathbf{S}}
\newcommand{\bX}{\mathbf{X}}
\newcommand{\bY}{\mathbf{Y}}
\newcommand{\ba}{\mathbf{a}}
\newcommand{\bs}{\mathbf{s}}
\newcommand{\by}{\mathbf{y}}
\newcommand{\cA}{\mathcal{A}}
\newcommand{\cF}{\mathcal{F}}
\newcommand{\cM}{\mathcal{M}}
\newcommand{\cQ}{\mathcal{Q}}
\newcommand{\cS}{\mathcal{S}}
\begin{document}
\runningtitle{Q-function Decomposition with Intervention Semantics with Factored Action Spaces}
\runningauthor{Junkyu Lee, Tian Gao, Elliot Nelson, Miao Liu, Debarun Bhattacharjya, Songtao Lu}
\twocolumn[

\aistatstitle{Q-function Decomposition with Intervention Semantics for\\Factored Action Spaces}

\aistatsauthor{
Junkyu Lee$^*$ \And Tian Gao$^*$ \And Elliot Nelson$^{\dagger}$
}
\aistatsauthor{
Miao Liu$^*$ \And Debarun Bhattacharjya$^*$ \And Songtao Lu$^{\ddagger}$
}
\aistatsaddress{$^*$IBM T. J. Watson Research Center \qquad
\texttt{\{junkyu.lee,miao.liu1\}@ibm.com}, \texttt{\{tgao,debarunb\}@us.ibm.com}\\
$^{\dagger}$Independent \texttt{elliot137@gmail.com} \qquad $^{\ddagger}$The Chinese University of Hong Kong \texttt{stlu@cse.cuhk.edu.hk}
}
]
\begin{abstract}
Many practical reinforcement learning environments have a discrete factored action space that induces a large combinatorial set of actions, thereby posing significant challenges. 
Existing approaches leverage the regular structure of the action space and resort to a linear decomposition of Q-functions, which avoids enumerating all combinations of factored actions. 
In this paper, 
we consider Q-functions defined over a lower dimensional projected subspace of the original action space, and study the condition for the unbiasedness of decomposed Q-functions using causal effect estimation from the no unobserved confounder setting in causal statistics. 
This leads to a general scheme which we call action decomposed reinforcement learning that 
uses the projected Q-functions to approximate the Q-function in standard model-free reinforcement learning algorithms. The proposed approach is shown to improve sample complexity in a model-based reinforcement learning setting. We demonstrate improvements in sample efficiency compared to state-of-the-art baselines in online continuous control environments and a real-world offline sepsis treatment environment.
\end{abstract}

\section{INTRODUCTION}
Reinforcement learning (RL) combined with deep learning has advanced to achieve superhuman performance in many application domains \citep{mnih2015human,silver2017mastering}, 
but there is still significant room for improving sample complexity and computational tractability
for wider acceptance and deployment in real-world applications \citep{10.1145/3241036,scholkopf2022causality}. 
In many practical applications such as healthcare domains, 
it is preferable to collect a batch of interaction data in an off-policy manner or even in an offline setting,
although it limits collecting diverse and large amount of samples
due to the costly or infeasible nature of interactions \citep{komorowski2018artificial,tang2022leveraging}. 
Even in online environments, a structured combinatorial action space is well known to deteriorate sample efficiency significantly \citep{dulac2015deep,tavakoli2018action,tavakoli2020learning}.

The goal of this paper is to 
improve the sample efficiency of value-based RL algorithms 
for solving problems having a large factored action space.
The challenges for handling large action spaces are well recognized and 
typical approaches involve 
either decomposing the action space \citep{tang2022leveraging,rebello2023leveraging}
or augmenting data
\citep{pitis2020counterfactual,pitis2022mocoda,tang2023counterfactual}.
In multi-agent RL \citep{sunehag2018value,son2019qtran,wang2020qplex,rashid2020monotonic},
there exists a clear decomposable structure that
allows representing the global value function
as a combination of local value function per each agent.
However,
in single-agent RL, 
existing approaches are motivated by the assumption 
that the given MDP can be separated into independent MDPs
leading to a linear Q-function decomposition
\citep{russell2003q,tang2022leveraging,seyde2022solving}.

In factored action spaces,
each action $\ba$ is defined over multiple action variables $\bA=[A_1, \ldots, A_K]$,
where it is often the case that 
the effects of action subspaces
defined over the disjoint action variables
such as $\{A_1, A_2\}$ and $\{A_3, A_4\}$ may not interact with one another.
For example, in control problems over a 2-dimensional plane
with state variables comprised of position and velocity for each dimension,
the effect of an impulse impacting an object is separated per dimension
due to the modularity of underlying causal mechanisms.
\begin{figure*}
\centering
\begin{subfigure}[b]{0.25\textwidth}
  \includegraphics[width=0.99\textwidth]{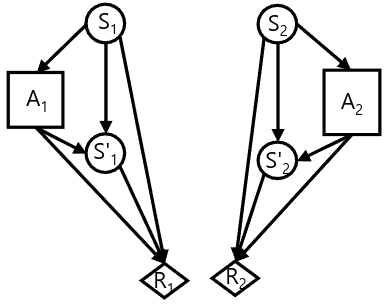}
  \caption{Fully Separable MDPs}
  \label{f11}
\end{subfigure}
\hfill
\begin{subfigure}[b]{0.25\textwidth}
  \includegraphics[width=0.99\textwidth]{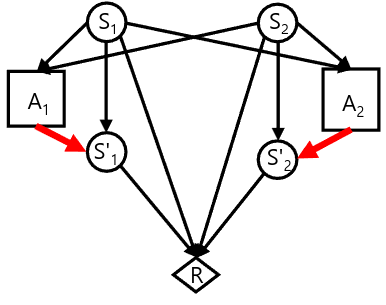}
  \caption{Separable Effects}
  \label{f12}
\end{subfigure}
\hfill
\begin{subfigure}[b]{0.25\textwidth}
  \includegraphics[width=0.99\textwidth]{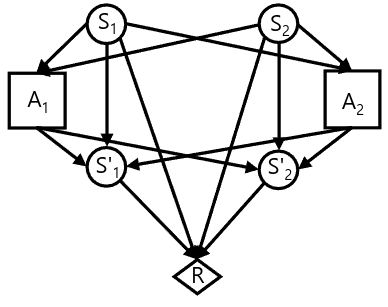}
  \caption{Non-separable MDP}
  \label{f13}
\end{subfigure}
\caption{
Decomposable Structures in Factored MDPs.
The diagrams show factored MDPs, where
the circles, squares, and diamonds represent
state, action, and reward variables.
\textbf{Fully Separable Structure}:
Fig \ref{f11} shows a factored MDP that can be fully separable
into two independent MDPs, considered in the previous work \citep{tang2022leveraging,seyde2022solving}.
\textbf{Separable Effects}:
Fig \ref{f12} shows a factored MDP that has non-separable dynamics
and rewards. However, the effects of factored actions are non-interacting,
and we study this structure with intervention semantics.
\textbf{Non-separable Structure}:
Fig \ref{f13} shows a non-separable factored MDP.
}
\label{fig:structures}
\end{figure*}
Motivated by such a modular structure in factored action spaces,
we investigate Q-function decomposition with the intervention semantics, 
leading to a general decomposition scheme that leverages the projected action spaces. 

In Section 3, 
we study theoretical properties around the soundness and sample complexity 
of Q-function decomposition in factored action spaces under the tabular model-based RL setting.
In Section 4,
we present a practical scheme called action decomposed RL
that augments model-free algorithms \citep{mnih2015human,fujimoto2019off}
with an improved critic learning procedure based on Q-function decomposition.
In Section 5, we implement this scheme with
Deep Q-networks (DQN) \citep{mnih2015human,van2016deep} and 
batch constrained Q-learning (BCQ) \citep{fujimoto2019off}
for experiments involving online 2D control environments
and sepsis treatment offline environments derived from the real world MIMIC-III dataset,
demonstrating improved sample efficiency with our proposed approach.

\section{PRELIMINARIES}
We consider reinforcement learning (RL) environments 
as factored Markov decision processes (MDPs) 
described by a tuple 
$\cM:=\langle\cS, \cA, P^0, P, R, \gamma \rangle$,
where
the state space $\cS$ and the action space $\cA$
are  factored into 
a set of variables $\bS:=\{S_1, \ldots, S_M\}$
and 
$\bA:=\{A_1, \ldots, A_N\}$.
We denote states and actions by vectors
$\bs = [s_1, \cdots, s_M]$ 
and 
$\ba = [a_1, \cdots, a_N]$, respectively.
$P^0:=\cS \rightarrow [0, 1]$ is the initial state distribution,
$P:=\cS \times \cA \times \cS \rightarrow [0, 1]$ 
is the state transition function,
$R:=\cS \times \cA \times \cS \rightarrow \mathbb{R}$ is the reward function,
and 
$\gamma \in (0, 1]$ is a discounting factor.

RL agents find an optimal policy
$\pi^{*} \in \{\pi: \cS \rightarrow \cA\}$
that maximizes  the value function,
$V_{\pi}(\bs)= 
\mathbb{E}_{\pi}[ \sum_{t=0}^{\infty} \gamma^t 
R(\bs^t, \ba^t, \bs^{t+1})\!\! \mid\!\! \bs^0\!\!=\!\!\bs]$.
Off-policy Q-learning algorithms 
sample state transition tuples
$(\bs, \ba, r, \bs')$
using a behavior policy $\pi^b:\cS \times \cA \rightarrow [0,1]$, 
and learns Q-function
$Q_{\pi}(\bs,\ba)=
\mathbb{E}_{\pi}[ \sum_{t=0}^{\infty} \gamma^t 
R(\bs^t, \ba^t, \bs^{t+1})\!\! \mid\!\! \bs^0\!\!=\!\!\bs, \ba^0=\ba]$
to find the optimal deterministic policy $\pi^*(\bs)$ by
$\ba^* = \arg\max_{\ba \in \cA} Q_{\pi^*}(\bs, \ba)$.

In this paper, we study Q-functions with intervention semantics
in the `no unobserved confounder setting'
\citep{pearl2009causality,schulte2024online}.
Given states $\bs$ and $\bs'$
at the current and the next time steps,
a causal model prescribes the dynamics 
by structural equations 
$S' \leftarrow F_{S'}\big(pa(S'), U_{S'}\big)$
defined over all variables $S' \in \bS'$,
where 
$U_{S'}$ is 
an exogenous noise variable and $pa(S') \subset \bS$.
In the absence of intervention,
the structural equations
factorize the state transition function as,
$P(\bS'|\bS) = \prod_{m=1}^{M}P\big(S'_{m}|pa(S'_{m})\big)$,
which we call
\textit{no-op} dynamics, 
governing the transition of the state variables 
free from the direct effect of $do(\bA)$\footnote{We use $do$ operator to emphasize the intervention semantics of an action $\bA$.}.

Action $do(\bA)$ fixes the value 
of the next state variables $\text{Eff}(\bA) \subset \bS'$ 
subject to 
the current state variables $\text{Pre}(\bA) \subset \bS$
\footnote{We use terminology from automated planning where $\text{Eff}(\bA)$ and $\text{Pre}(\bA)$ refer to the effect and precondition of action $\bA$.
$\text{Eff}(\bA)$ are the state variables that are controllable by action,
whereas $\bS'\setminus \text{Eff}(\bA)$ are non-controllable state variables.
The intervention policy $\sigma_{\bA}$ can be depend on the state variables $\text{Pre}(\bA)$. 
}.
Namely,
for all $S' \in \text{Eff}(\bA)$,
the structural equation $F_{S'}$
is replaced by a conditional intervention policy $\sigma_{\bA}$
such that 
\small
\begin{equation}
\text{Eff}(\bA) \leftarrow \sigma_{\bA}\big(\text{Pre}(\bA)\big).    
\end{equation}
\normalsize
Then, the interventional state transition function
$P\big(\bS'|\bS,do(\bA)\big)$
follows a truncated factorization as,
\small
\begin{equation}
\label{eq:state_transition_causal}
P\big(\bS'|\bS,do(\bA)\big)\!=\!\!
P(\bS'\setminus\!\text{Eff}(\bA)|\bS)
\cdot
\mathbb{I}\Big[
\text{Eff}(\bA)\!=\!
\sigma_{\bA}\big(\text{Pre}(\bA)\big)
\Big],
\end{equation}
\normalsize
where $\mathbb{I}$ is the indicator function.

The deterministic reward can be modeled as
a structural equation
$R \leftarrow F_{R}\big(pa(R)\big)$,
where $pa(R) \subset \bS \cup \bS'$
and the domain of $R$, $dom(R)$ is the range of the reward function
\footnote{We abused the notation.
$R$ denotes either a random variable that defines the reward function with a structural equation $F_{R}$
or a reward function when it is clear from context.}
.
The potential outcome \citep{rubin1974estimating} 
of $R$ subject to action $do(\bA)$
is prescribed by $R \leftarrow F_{R}\big(pa(R), do(\bA)\big)$
and 
the conditional expected reward 
$\mathbb{E}[R(\bs, \ba, \bs')|\bs, \ba]$
can be written as a causal effect,
\small
\begin{equation}
\label{eq:expected_reward_causal}
\sum_{\bS'}P\big(\bS'|\bS, do(\bA)\big)
\!\!\!\!\!\!\sum_{r \in dom(R)}\!\!\!\!
r \cdot \mathbb{I}[
r=F_{R}\big(pa(R), do(\bA)\big)
].
\end{equation}
\normalsize

In terms of the Neyman-Rubin potential outcomes frameworks
\citep{rubin2005causl},
the state $\bS$ is the observed confounder
since it influences both treatment on $\bS'$ 
and the outcome $R$.
The primary interest in causal statistics 
is to estimate causal effects
in the presence of confounding bias,
which stems from the fundamental problem of causal inference.
If RL agents 
could revisit the same states and collect alternative action choices,
then we have no issue with the confounding bias in principle.
However, 
we will see that
formulating RL as 
causal effect estimation with intervention semantics
offers an opportunity to improve the sample efficiency.
There are similar active research efforts to improve sample efficiency 
in bandits \citep{johansson2016learning,saito2023offpolicy}
and machine learning applications such as recommendation systems \citep{gao2024causal}.

We can categorize the decomposable structures in factored MDPs
by considering which components are separable, as shown in Figure \ref{fig:structures}.
\cite{tang2022leveraging} studied the case  shown in Figure \ref{f11},
where the state space and the reward function are all separable per projected action spaces.
In such a case, Q-function can be exactly decomposed as
\small
\begin{equation}
\label{eq:linear_q_dec}
\cQ_{\pi}(\bS,\bA) = \sum_{n=1}^{N}Q^n(\bS, A_n),    
\end{equation}
\normalfont
where each $Q^n(\bS, A_n)$ is defined over the whole state space, yet 
restricted to apply actions in the subspace $\cA_n$.
\cite{rebello2023leveraging} proposed an importance sampling estimator that leverages the fully separable MDP structure.
\cite{seyde2022solving} 
demonstrated state-of-the-art performance for solving continuous control problems
through a simple modification that replaces Q-function with a 
linear decomposition of Q-functions as shown in Eq.~\eqref{eq:linear_q_dec}
in deep Q-learning (DQN) algorithms \citep{mnih2015human}.

\section{METHODS}
\subsection{Projected Action Space MDPs}
Let's consider the decomposition case 
shown in Figure \ref{f12},
where
we assume 
that the effect of factored actions are non-interacting.
\begin{assumption}
\normalfont
\label{assumption:1}
Given a partition of action variables $\bA=[\bA_1, \ldots, \bA_K]$,
there exists a partition over state variables 
$\bS'=[\bS'_1, \ldots, \bS'_K, \bS'_{K+1}]$ 
such that $\bS'_k=\text{Eff}(\bA_k)$
and 
$\bS'_{K+1}\cap \text{Eff}(\bA)
=\emptyset$,
where
$\text{Eff}(\bA)
=\cup_{k=1}^{K} \text{Eff}(\bA_k)
$.
\end{assumption}
Then, we can factorize the interventional state transition function
$P\big(\bS'|\bS, do(\bA)\big)$ as
\small
\begin{equation}
\label{eq:intervention_transition}
\!\!P\big(\bS'|\bS, do(\bA)\big)\!\!=\!\!
P(\bS_{K+1}'|\bS,\text{Eff}(\bA))\!\!
\prod_{k=1}^{K}\!\!P\big(\bS'_k|\bS, do(\bA_k)\big).
\end{equation}
\normalsize

In terms of the interventional state transition function $P\big(\bS'|\bS, do(\bA)\big)$,
$Q_{\pi}(\bs, \ba)$ can be written as
\small
\begin{equation}
\label{eq:q_function_causal}
    Q_{\pi}(\bs, \ba)\!=\!
    \sum_{\bs'}\!\!P\big(\bs'|\bs, do(\ba)\big)
    [
    R(\bs, \ba, \bs')\!+\!\gamma Q_{\pi}\big(\bs', \pi(\bs')\big) 
    ].
\end{equation}
\normalsize

For each of the projected action space $\cA_k$ subject to action variables $\bA_k$,
let's define a projected action space MDP as follows.

\begin{definition}[Projected Action Space MDP]
\normalfont
Given a factored MDP $\cM:=\langle \cS, \cA, P^0, P, R, \gamma\rangle$,
and 
a projected action space $\cA_k:=\bigtimes_{A_i \in \bA_k} A_i$,
a projected action space MDP 
$\cM^{k}:=\langle \cS, \cA_k, P^0, P, R, \gamma\rangle$ 
is defined by 
the state transition function
in terms of the interventional state transition function
$P\big(\bS'|\bS, do(\bA_k)\big)$ for a subset of action variables $\bA_k$ as,
\small
\begin{gather}
\label{eq:factored_state_transition_causal}
\!\!\!\!P\big(\bS_k'|\bS, do(\bA_k)\big)
P\big(\bS_{K+1}'|\bS,\text{Eff}(\bA)\big) 
\!\!\!\!\!\!\!\!
\prod_{i \in [1..K], i \neq k}
\!\!\!\!\!\!\!
P(\bS_i'|\bS)
,
\end{gather}
\normalsize
where
$P\big(\bS'_k|\bS, do(\bA_k)\big)$
is the interventional distribution 
over $\bS'_k$ under the effect of $do(\bA_k)$.
\end{definition}

Under Assumption \ref{assumption:1},
we can rewrite the state transition function 
$P\big(\bS'|\bS,do(\bA)\big)$
with $P\big(\bS'|\bS,do(\bA_k)\big)$ as,
\small
\begin{align}
P\big(\bS'|\bS, do(\bA)\big)\!\!=
\!\!
P\big(\bS'|\bS, do(\bA_k)\big)
\!\!\!\!\!\!\!
\prod_{i=1,i\neq k}^{K}
\!\!\!\!\!\!\!
\frac{P\big(\bS'_i|\bS, do(\bA_i)\big)}{P(\bS'_i|\bS)}\label{eq:intervention_factorization_b}\\
=
P\big(\bS'|\bS, do(\bA_k)\big)
\!\!\!\!
\prod_{i=1,i\neq k}^{K}
\!\!\!\!
\frac{\mathbb{I}
\big[
    \bS'_i=\sigma_{\bA_i}\big(\text{Pre}(\bA_i)\big)
\big]
}
{P(\bS'_i|\bS)}\label{eq:intervention_factorization_c}\\
=
P\big(\bS'|\bS, do(\bA_k)\big)
\!\!\!\!\!\!\prod_{i=1,i\neq k}^{K}\!\!
1^{}\big/
{
    P\Big(
        \bS'_i=\sigma_{\bA_i}\big(\text{Pre}(\bA_i)\big)
    |\bS
    \Big)
},\label{eq:intervention_factorization_d}
\end{align}
\normalsize
where 
$\sigma_{\bA_i}\big(\text{Pre}(\bA_i)\big)$
is the conditional intervention policy that fixes
the value of $\text{Eff}(\bA_i)$.

Proposition \ref{prop:projected_q_function}
shows that 
Q-function $Q_{\pi_k}(\bs, \ba_k)$ in $\cM^k$
can be written 
in terms of $P\big(\bS'|\bS, do(\bA_k)\big)$.
\begin{restatable}[Projected Q-function]{proposition}{propqfunction}
\label{prop:projected_q_function}
\normalfont
Given a projected MDP $\cM^{k}$ over action variables $\bA_k$,
the Q-function 
$Q_{\pi_{k}}(\bs, \ba_k)$
can be recursively written as,
\small
\begin{gather*}
\label{eq:projected_q_function_causal}
    Q_{\pi_k}(\bs, \ba_k)\!\!=
    \!\!\sum_{\bs'}\!\!
    P\!\big(\bs'|\bs, do(\ba_k)\big)
    \big[R(\bs, \ba_k, \bs')
    \!+\!\gamma 
    Q_{\pi_{k}}(\bs', 
    \pi_k(\bs'))
    \big],
\end{gather*}
\normalsize
where
$\pi_{k}:\cS \rightarrow \cA_k$
is a factored policy for $\bA_k$.
\end{restatable}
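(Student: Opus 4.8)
The plan is to recognize $\cM^{k}$ as an ordinary (time‑homogeneous, infinite‑horizon) MDP whose one‑step kernel is exactly the single‑intervention distribution $P\big(\bs'|\bs, do(\ba_k)\big)$, and then to invoke the textbook Bellman fixed‑point derivation. Only the identification of the kernel requires genuine care; everything after that is routine.

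First I would check that the product $P\big(\bS'_k|\bS, do(\bA_k)\big)\,P\big(\bS'_{K+1}|\bS,\text{Eff}(\bA)\big)\prod_{i\neq k}P(\bS'_i|\bS)$ appearing in the definition of $\cM^{k}$ coincides with $P\big(\bS'|\bS, do(\bA_k)\big)$. Intervening on $\bA_k$ alone replaces only the structural equations of the variables in $\text{Eff}(\bA_k)=\bS'_k$ by $\sigma_{\bA_k}$, so $P\big(\bS'_k|\bS,do(\bA_k)\big)=\mathbb{I}\big[\bS'_k=\sigma_{\bA_k}(\text{Pre}(\bA_k))\big]$, while every remaining state variable keeps its no‑op mechanism. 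By Assumption~\ref{assumption:1} the blocks $\bS'_1,\dots,\bS'_K,\bS'_{K+1}$ partition $\bS'$ with $\text{Eff}(\bA)=\bigcup_k\bS'_k$, and since $pa(\bS'_{K+1})\subseteq\bS\cup\text{Eff}(\bA)$, the truncated factorization of Eq.~\eqref{eq:state_transition_causal} applied to the single action $\bA_k$ is precisely the displayed product. Hence $\cM^{k}=\langle\cS,\cA_k,P^0,P,R,\gamma\rangle$ is an MDP with action set $\cA_k$, transition kernel $P(\cdot\mid\bs,do(\ba_k))$, reward $R(\bs,\ba_k,\bs')=F_R\big(pa(R)\big)$ evaluated at the post‑intervention successor state, and discount $\gamma$.

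With $\cM^{k}$ in hand, I would expand $Q_{\pi_k}(\bs,\ba_k)$ as the expected discounted return of $\cM^{k}$ started from $(\bs,\ba_k)$ under the deterministic factored policy $\pi_k:\cS\to\cA_k$, peel off the first reward and the first transition $\bs'\sim P(\cdot\mid\bs,do(\ba_k))$, and use the Markov property together with time‑homogeneity to recognise the discounted tail conditioned on $\bs'$ as $\gamma\,Q_{\pi_k}\big(\bs',\pi_k(\bs')\big)$ (determinism of $\pi_k$ lets the next action be evaluated rather than averaged). Summing over $\bs'$ against $P(\cdot\mid\bs,do(\ba_k))$ yields exactly the stated identity; this half uses no structure specific to factored action spaces.

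The main obstacle is the kernel identification in the first step: one must verify that conditioning the $\bS'_{K+1}$ factor on all of $\text{Eff}(\bA)$ — not merely on $\text{Eff}(\bA_k)$ — is consistent with an intervention on $\bA_k$ alone, since in $\cM^{k}$ the blocks $\bS'_i$, $i\neq k$, are generated by the no‑op mechanism $P(\bS'_i\mid\bS)$ rather than set by any intervention. This is exactly where Assumption~\ref{assumption:1} (a clean partition of the effect variables, with $\bS'_{K+1}$ uncontrolled) is used; once it is secured, the rest of the argument is mechanical.
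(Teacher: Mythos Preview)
Your proposal is correct and follows essentially the same approach as the paper: first identify that the transition kernel of $\cM^{k}$ defined in Eq.~\eqref{eq:factored_state_transition_causal} coincides with $P(\bs'\mid\bs,do(\ba_k))$, then run the standard Bellman expansion (the paper does this by explicitly unrolling and regrouping the infinite sum, you by invoking the Markov/time-homogeneity ``peel off one step'' argument). Your added care about why $P(\bS'_{K+1}\mid\bS,\text{Eff}(\bA))$ conditions on all of $\text{Eff}(\bA)$ rather than just $\text{Eff}(\bA_k)$ is a valid point the paper glosses over, but it does not change the strategy.
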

\begin{proof}
The actions in $\cA_k$ only intervene on $\bS'_k$ and 
the rest of the state variables $\bS'\setminus \text{Eff}(\bA_k)$
follow the \textit{no-op} dynamics.
Namely, the state transition follows 
Eq.\!~\eqref{eq:factored_state_transition_causal}.
By definition, 
$Q_{\pi_k}(\bs, \ba_k)$
is the value of 
applying action 
$do(\bA^0\!\!=\!\!\ba_k)$ in state $\bS^0\!\!\!=\!\!\bs$ at time step $t\!\!\!=\!\!\!0$,
and applying actions $do\big(\bA^t\!\!\!\!=\!\!\pi_k(\bS^t)\big)$ 
for the remaining time steps $t\!\!=\!\!1..\infty$.
It is easy to rewrite
$Q_{\pi_k}(\bs, \ba_k)\!=\!
\sum_{\bS^1}P(\bS^1|\bs, do(\ba_k))R(\bs, \ba_k, \bS^1)
\!\!+
\sum_{t=1}^{\infty}\sum_{\bS^1..\bS^{t}}
\prod_{j=0}^{t} \gamma^{t} P(\bS^{j+1}|\bS^j, do(\pi_k(\bS^j)))
R(\bS^t,$ $\pi_k(\bS^t),\bS^{t+1})$
as desired.
\end{proof}
From Eq.~\eqref{eq:intervention_factorization_b},
we see that 
the interventional state transition functions
$P\big(\bS'|\bS, do(\bA)\big)$
and
$P\big(\bS'|\bS, do(\bA_k)\big)$
are different only in $P(\bS'_i|\cdot)$ for 
$i \neq k$
such that we can weight the projected 
Q-functions $Q_{\pi_k}(\bs, \ba_k)$ to represent 
$Q_{\pi}(\bs, \ba)$.
\begin{definition}[Weighted Projected Q-functions]
\normalfont
Given a policy $\pi$ and its projected policy $\pi_k$,
a weighted projected Q-function $\tilde{Q}_{\pi}(\bs, \ba_k)$
can be defined 
as,
\small
\begin{align}
\tilde{Q}_{\pi_k}(\bs, \ba_k)
&
=
\sum_{\bs'}
\frac
{
    P\big(\bs'|\bs, do(\ba_k)\big)
}
{
    \rho_{-k}(\bs, \bs')
}
\big[
R(\bs, \ba_k, \bs')
+\nonumber\\
&\gamma
\tilde{Q}_{\pi_k}\big(\bs', \pi_k(\bs')\big)
\big]
\mathbb{I}\big[\text{Eff}(\bA)=\sigma_{\bA}\big(\text{Pre}(\bA)\big)\big],
\end{align}
\normalsize
where 
$\rho_{-k}(\bs, \bs')\!\!=\!\! 
\prod_{i=1,i\neq k}^{K}
P\big(
\bs'_i=\sigma_{\ba_i}(\text{Pre}(\ba_i))
|\bs
\big)
$
from Eq.~\eqref{eq:intervention_factorization_d},
and the values of the next state variables $\text{Eff}(\bA)$ are consistent to
action $do\big(\pi(\ba')\big)$.
\end{definition}
Note that 
there are two main differences between 
$\tilde{Q}_{\pi_k}(\bs', \ba'_k)$
and 
${Q}_{\pi_k}(\bs', \ba'_k)$.
First, the weighted projected Q-functions
are defined relative to the action trajectories 
that follow the policy $\pi$ in the action space $\cA$.
Second, 
the weights $\rho_{-k}(\bs, \bs')$ represent 
the propensity score \citep{rubin2005causl} 
of 
the \textit{no-op} dynamics.

\subsection{Model-based Factored Policy Iteration}
Next, we present a concrete model-based RL algorithm, called 
model-based factored policy iteration (MB-FPI)
for studying the soundness of proposed Q-function decomposition.
We also show improved sample complexity of MB-FPI under the following assumptions.
\begin{assumption}
\label{assumption:2}
\normalfont
The \textit{no-op} dynamics
$P(\bS'|\bS)=\prod_{k=1}^{K}\!P(\bS'_k|\bS)
P\big(\bS'_{K+1}|\bS,\text{Eff}(\bA)\big)$ 
is positive. 
\end{assumption}
\begin{assumption}
\label{assumption:3}
\normalfont
Given a behavior policy $\pi^b(\ba|\bs)$,
and its factored policy $\pi^b_k(\ba_k|\bs)$,
the supports of $P(\bs'|\bs)$
induced by $\pi^b$ and $\pi^b_k$ are the same in every states $\bs$,
namely,
$
\big\{\bs' | 
\sum_{\ba}P(\bs'|\ba, \bs)\pi^b(\ba|\bs) > 0 \big\}
=
\big\{\bs' | 
\sum_{\ba_k}P(\bs'|\ba_k, \bs)\pi^b_k(\ba_k|\bs) > 0 \big\}
$.
\end{assumption}
In words, the above assumptions ensure that
the exploration with projected actions in $\cA_k$
covers the same state space visited by the action space $\cA$.

\begin{algorithm}[t]
    \caption{Model-based Factored Policy Iteration}
    \label{alg:mb-fpi}
    \begin{footnotesize}
    \begin{algorithmic}[1]
        \State Learn 
               intervention policies $\sigma_{\bA_k}$ for all $k = 1 to K$, 
               the factored \textit{no-op} dynamics 
               $P\big(\bS'_{K+1}|\bS,\text{Eff}(\bA)\big)$,
               and the reward function $R(\bS, \bA, \bS')$.
        \State Initialize 
        an arbitrary factored policy $\pi=[\pi_1, \ldots, \pi_K]$
        \Repeat
            \Statex{\phantom{xx} \textsc{//Policy Evaluation}}
            \For{k = 1 to K}
                \State
                $P(\bS'|\bS, \bA_k)
                \leftarrow
                \mathbb{I}[\bS'_k
                =
                \sigma_{\bA_k}(\bS)]
                \prod_{i=1, i\neq k}^{K} \mathbb{I}[\bS'_i=
                \phantom{xxxxxxxxxxxxxxxxxx} \sigma_{\pi_i(\bS)}(\bS)]
                P\big(\bS'_{K+1}|\bS,\text{Eff}(\bA)\big)
                $
                \For{each $(\bs, \ba_k) \in (\bS, \bA_k)$}
                    \State
                    $\tilde{Q}_{\pi_k}(\bs, \ba_k)\!\!\leftarrow\!\!
                    \sum_{\bS'}P(\bS'|\bs, \ba_k) [
                    R(\bs, \ba_k, \bS') +
                    \phantom{xxxxxxxxxxxxxxxxxxx}
                    \gamma \tilde{Q}_{\pi_k}(\bs, \ba_k)]
                    $
                \EndFor
            \EndFor
      
            \Statex{\phantom{xx} \textsc{//Policy Improvement}}
            \State Select an arbitrary $k$ for policy improvement
            \For {each state $\bs \in \cS$}
                \State $\pi_k(\bs) \leftarrow \arg\max_{\ba_k} \tilde{Q}_{\pi_k} (\bs, \ba_k)$
            \EndFor
            
        \Until No change in $\pi(\bs)$
        
    \end{algorithmic}
    \end{footnotesize}
\end{algorithm} 
Algorithm \ref{alg:mb-fpi}
is a model-based algorithm 
that learns models of projected MDPs $\cM^k$ (line 1),
and
performs 
policy iteration \citep{sutton2018reinforcement} 
to find a locally optimal policy
$\pi^*(\bs)=[\pi^*_1(\bs), \ldots, \pi^*_K(\bs)]$.
The policy evaluation procedure 
computes
state transition function 
$P(\bS'|\bS, \bA_k)$ (line 5)
and updates Q-tables $\tilde{Q}_{\pi_k}(\bs, \ba_k)$ (line 7).
The policy improvement procedure
selects an arbitrary $\pi_k$ to improve (line 11).
The algorithm terminates and returns 
the locally optimal policy $\pi^*$
when there is 
no change during updating $\pi$.

Under the intervention semantics,
$\sigma_{\bA_k}$ fixes the value of $\bS'_k$, and 
we see that $P(\bS'|\bS, \bA_k)$ constrains the state trajectory to follow 
not only a single factored policy $\pi_k$, but also all other policies
following the factorization shown in Eq.\!~\eqref{eq:intervention_transition}.
As a result, $\tilde{Q}_{\pi_k}(\bs, \ba_k)$
maintains the values of $\ba_k$ in $\bs$
while fixing all the rest of the actions $\ba_i$ 
to follow the current policy $\pi_i$
for all $i \in [1..K]$ except $i = k$.
This implies the consistency between two value functions 
that are evaluated on a deterministic policy $\pi$ and its projection $\pi_k$,
namely, 
$\tilde{V}_{\pi}(\bs) = \tilde{V}_{\pi_k}(\bs)$,
where
$\tilde{V}_{\pi}(\bs)=\tilde{Q}_{\pi}\big(\bs, \pi(\bs)\big)$
and
$\tilde{V}_{\pi_k}(\bs)=\tilde{Q}_{\pi_k}\big(\bs, \pi_k(\bs)\big)$.
However, 
it does not guarantee to find the globally optimal policy.
The following Theorem \ref{theorem:convergnce_fpi}
shows that MB-FPI converges to the globally optimal policy
under the additional monotonicity assumption on $Q_{\pi}(\bs, \ba)$.
\begin{restatable}[Convergence of MB-FPI]{theorem}{convergncefpi}
\label{theorem:convergnce_fpi}
\normalfont
MB-FPI converges 
to a locally optimal policy in a finite number of iterations.
If the underlying $Q_{\pi}(\bs, \ba)$ is monotonic,
then MB-FPI converges to the optimal policy $\pi^*$.
\end{restatable}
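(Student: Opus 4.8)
The plan is to read MB-FPI as block-coordinate policy iteration, with each ``coordinate'' being one factored sub-policy $\pi_k$, and to reduce the two claims to the classical policy-iteration arguments glued together with the structural identities derived above. Throughout I would fix a deterministic tie-breaking rule for the $\arg\max$ in the policy-improvement step and assume each block is selected for improvement infinitely often (e.g.\ round-robin); this only affects bookkeeping.

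\emph{Finite convergence to a local optimum.} The load-bearing structural fact is that, by construction of the weighted projected Q-function and the consistency $\tilde V_\pi(\bs)=\tilde V_{\pi_k}(\bs)$ noted before the theorem, one has $\tilde Q_{\pi_k}(\bs,\ba_k)=Q_\pi\big(\bs,(\ba_k,\pi_{-k}(\bs))\big)$, where $\pi_{-k}(\bs)$ denotes the current policy's actions on the blocks other than $k$ and $Q_\pi=\tilde Q_\pi$ (an empty product of weights). Indeed, since $do(\bA_k)$ intervenes only on $\bS'_k$, the propensity weights $1/\rho_{-k}$ of Eq.~\eqref{eq:intervention_factorization_d} convert the \textit{no-op} transitions of the other blocks in $\cM^k$ (Eq.~\eqref{eq:factored_state_transition_causal}) back into the intervention dynamics of Eq.~\eqref{eq:intervention_transition}, and by Assumptions~\ref{assumption:2}--\ref{assumption:3} these weights are finite and do not alter the $\arg\max$. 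Hence a block-$k$ improvement step returns a policy $\pi'$ that differs from $\pi$ only on block $k$ and satisfies, for every $\bs$, $Q_\pi(\bs,\pi'(\bs))=\max_{\ba_k}\tilde Q_{\pi_k}(\bs,\ba_k)\ge \tilde Q_{\pi_k}(\bs,\pi_k(\bs))=V_\pi(\bs)$; the standard monotone-improvement argument then yields $V_{\pi'}\ge V_\pi$ pointwise, and it applies because the dynamics induced by $\pi'$ are exactly the ones (Eq.~\eqref{eq:intervention_transition}) underlying $\tilde Q_{\pi_k}$. Since $\cS$ and the $\cA_k$ are finite there are finitely many deterministic factored policies and hence finitely many value functions, so the non-decreasing value sequence stabilizes after finitely many iterations; once it is constant, every block-$k$ step forces $\pi_k(\bs)\in\arg\max_{\ba_k}\tilde Q_{\pi_k}(\bs,\ba_k)$ (chase the same chain of inequalities), and after one further round the fixed tie-break makes $\pi$ stationary, producing a full no-change cycle and termination. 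The terminal $\pi$ then satisfies $\pi_k(\bs)\in\arg\max_{\ba_k}\tilde Q_{\pi_k}(\bs,\ba_k)$ for every $k$ and $\bs$, i.e.\ no single-block deviation improves $V_\pi$; this is exactly local optimality.

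\emph{Global optimality under monotonicity.} Unwinding the identity above, the terminal condition says that for every state $\bs$ the joint action $\pi(\bs)$ is a \emph{coordinate-wise} maximizer of $\ba\mapsto Q_\pi(\bs,\ba)$ over $\cA_1\times\cdots\times\cA_K$ -- a pure-strategy equilibrium of the ``game'' with payoff $Q_\pi(\bs,\cdot)$. The monotonicity hypothesis on $Q_\pi$ is precisely what promotes such a coordinate-wise maximum to a global one (the analogue of the individual-global-max property in value-factored multi-agent RL): it rules out coordinate-wise optima that are not globally optimal, so $\pi(\bs)\in\arg\max_{\ba}Q_\pi(\bs,\ba)$ for all $\bs$. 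Thus $\pi$ is greedy with respect to its own $Q_\pi$, and by the Bellman optimality characterization (equivalently, a further policy-improvement step produces no change only at the optimum) we conclude $\pi=\pi^*$.

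\emph{Where the difficulty lies.} The finiteness/termination bookkeeping and the appeal to the policy-improvement theorem are routine. The two substantive steps are (i) that the block-$k$ update is a genuine improvement in the \emph{original} MDP, which rests entirely on the weighted-projected-Q identity and therefore on Assumptions~\ref{assumption:1}--\ref{assumption:3} (positivity and support matching make the weights finite and $\arg\max$-preserving, and Eq.~\eqref{eq:intervention_transition} guarantees the reweighted dynamics are the intended ones), and (ii) the implication ``monotone $Q_\pi$ $\Rightarrow$ a coordinate-wise maximum is a global maximum,'' which is where the second hypothesis is genuinely used and where a spurious local optimum would otherwise break the argument. I expect (ii) to be the crux, since it hinges on the precise definition of ``monotonic'' and is the only place the extra hypothesis enters.
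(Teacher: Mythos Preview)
Your proposal is correct and follows essentially the same approach as the paper: both treat MB-FPI as block-coordinate policy iteration, hinge on the identity $\tilde Q_{\pi_k}(\bs,\ba_k)=Q_\pi\big(\bs,(\ba_k,\pi_{-k}(\bs))\big)$, use finiteness of the policy space for termination, and invoke monotonicity (in the sense of \citet{rashid2020monotonic}) to upgrade a coordinate-wise fixed point to a global one. Your write-up is in fact more careful than the paper's on bookkeeping (tie-breaking, infinite block visitation, the two-stage ``values stabilize, then policy stabilizes'' termination argument, and the explicit Bellman-optimality step).

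One small remark on presentation: you establish the key identity by routing through the weighted projected Q-function and the propensity factors $1/\rho_{-k}$, appealing to Assumptions~\ref{assumption:2}--\ref{assumption:3} so the weights are finite. The paper instead reads it off directly from line~5 of Algorithm~\ref{alg:mb-fpi}, where the transition is \emph{constructed} with the other blocks fixed to $\sigma_{\pi_i(\bS)}$ rather than reweighted from no-op dynamics; consequently no division by $\rho_{-k}$ occurs and Assumptions~\ref{assumption:2}--\ref{assumption:3} are not needed for this convergence argument (they matter for learnability of the models, not for the policy-iteration analysis once the models are in hand). Your route is a valid alternative derivation of the same identity, but the direct reading is cleaner here and avoids invoking hypotheses that the step does not actually require.
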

\begin{proof}
The convergence of MB-FPI follows from the fact that the policy improvement step 
changes the factored policy $\pi=[\pi_1,\ldots,\pi_k]$
per block-wise update of the action variables  in a finite space MDP (line 8).
If the underlying $Q_{\pi}(\bs, \ba)$ is monotonic \citep{rashid2020monotonic},
namely, the block-wise improvement is equivalent to the joint improvement,
$Q_{\pi_k}\big(\bs, \pi_k(\ba_k)\big)
> Q_{\pi_k}(\bs, \ba_k)
\iff 
Q_{\pi}\big(\bs, \pi_k(\ba)\big)
> Q_{\pi}(\bs, \ba)
$,
the policy improvement step reaches the same fixed point that would have reached by
the non-factored policy iteration.
The full proof can be found in the supplementary material.
\end{proof}
\begin{restatable}[Sample Complexity of MB-FPI]{theorem}{complexitymbfpi}
\normalfont
Given $\delta \in (0,1)$ and $\epsilon > 0$,
the sample complexity for learning
the \textit{no-op} dynamics $P(\bS'_{K+1}|\bS, \text{Eff}(\bA))$ 
and
the intervention policy $\sigma_{\bA_k}(\text{Pre}(\bA_k)$ 
within error bound $\epsilon$ with at least probability $1-\delta$
are 
$$N_{P} \geq \frac{|\bS_{K+1}| |\bS| |\bS\setminus \bS_{K+1}|}{\epsilon^2} \log \frac{2 |\bS||\bS\setminus \bS_{K+1}|}{\delta}$$
and 
$$N_{\sigma} \geq \frac{|\text{Eff}(\bA_k)| |\text{Pre}(\bA_k)|}{\epsilon^2} \log \frac{2|\text{Pre}(\bA_k)|}{\delta}.$$
\end{restatable}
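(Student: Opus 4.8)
The plan is to cast both bounds as instances of estimating a collection of conditional categorical distributions from samples, and to combine a standard multinomial concentration inequality with a union bound over conditioning contexts, in the spirit of tabular model-based PAC-MDP analyses.

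\emph{Setup.} Read $|\bS|$, $|\bS_{K+1}|$, $|\bS\setminus\bS_{K+1}|$, $|\text{Eff}(\bA_k)|$, and $|\text{Pre}(\bA_k)|$ as the cardinalities of the corresponding joint domains. For the \textit{no-op} dynamics $P(\bS'_{K+1}\mid\bS,\text{Eff}(\bA))$ the conditioning contexts are the joint assignments to $(\bS,\text{Eff}(\bA))$; since $\text{Eff}(\bA)=\bS'\setminus\bS'_{K+1}$ ranges over the same domain as the state variables outside block $K+1$, there are $C_P=|\bS|\,|\bS\setminus\bS_{K+1}|$ contexts, and in each the unknown is a distribution over the $|\bS_{K+1}|$ values of $\bS'_{K+1}$. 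Likewise, under the exogenous-noise model the conditional intervention policy $\sigma_{\bA_k}(\text{Pre}(\bA_k))$ induces the law of $\text{Eff}(\bA_k)$ given $\text{Pre}(\bA_k)$, so there are $C_\sigma=|\text{Pre}(\bA_k)|$ contexts, each carrying a distribution over the $|\text{Eff}(\bA_k)|$ values of $\text{Eff}(\bA_k)$. The no-unobserved-confounder assumption is exactly what makes these conditional laws identifiable from the logged transitions, so their empirical frequencies are consistent estimators. Estimating each conditional law by the empirical frequency, and assuming — via a generative model, or via Assumptions~\ref{assumption:2}--\ref{assumption:3}, which guarantee every relevant context is visited — that each context receives $m$ i.i.d.\ samples, the total sample budget is $N=C\cdot m$.

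\emph{Concentration and union bound.} Fix a context $c$ with $m$ samples, let $d$ be its number of outcomes and $\hat P(\cdot\mid c)$ the empirical distribution. I would invoke the $\ell_1$-deviation bound for empirical multinomials (Bretagnolle--Huber--Carol / Weissman et al.), $\Pr\big(\|\hat P(\cdot\mid c)-P(\cdot\mid c)\|_1>\epsilon\big)\le 2^{d}e^{-m\epsilon^2/2}$ (equivalently, a per-coordinate Hoeffding bound union-bounded over the $d$ outcomes). Setting the right-hand side to $\delta/C$ and solving gives $m\gtrsim \epsilon^{-2}\big(d\ln 2+\ln(2C/\delta)\big)$, which for $d\ge 2$ and $2C/\delta$ not too small is implied by $m\ge \tfrac{d}{\epsilon^2}\log\tfrac{2C}{\delta}$. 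A union bound over the $C$ contexts then ensures that, with probability at least $1-\delta$, every conditional law is within $\ell_1$ error $\epsilon$ simultaneously; multiplying the per-context requirement by $C$ and substituting $(C,d)=(C_P,|\bS_{K+1}|)$ and $(C,d)=(C_\sigma,|\text{Eff}(\bA_k)|)$ yields exactly the stated $N_P$ and $N_\sigma$. The reward function, if it is estimated as well, is handled identically.

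\emph{Main obstacle.} The only genuine decision point is the bookkeeping that produces the clean product form $\tfrac{d\,C}{\epsilon^2}\log\tfrac{2C}{\delta}$ rather than a sum of a $\tfrac{d}{\epsilon^2}$ term and a $\tfrac{1}{\epsilon^2}\log\tfrac{C}{\delta}$ term: this requires committing to a particular concentration inequality and a mild overbounding step, and tacitly to $d\ge 2$. A secondary point is the ``$m$ samples per context'' premise, which in an online or off-policy run is not free — the cleanest route is to state the result for the generative-model regime where each context is queried $m$ times by construction, and to note that Assumptions~\ref{assumption:2}--\ref{assumption:3} (together with a floor on the per-context visitation probabilities) are precisely what transport the same count, up to constants, to trajectory sampling.
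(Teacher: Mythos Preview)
Your proposal is correct and follows essentially the same template as the paper's proof: a concentration inequality for estimating each conditional categorical law, a union bound over the $C$ conditioning contexts, and then the substitution $(d,C)=(|\bS_{K+1}|,\,|\bS|\cdot|\bS\setminus\bS_{K+1}|)$ and $(d,C)=(|\text{Eff}(\bA_k)|,\,|\text{Pre}(\bA_k)|)$ to read off $N_P$ and $N_\sigma$. The paper's own argument is slightly terser and uses per-coordinate Hoeffding directly (so its implicit error metric is $\ell_\infty$ on the probability parameters rather than $\ell_1$ on the distribution), arriving at the generic bound $N\ge |\bX||\bY|\epsilon^{-2}\log(2|\bY|/\delta)$ for learning $P(\bX\mid\bY)$ and then instantiating $\bX,\bY$---which is exactly the ``per-coordinate Hoeffding union-bounded over the $d$ outcomes'' alternative you already flag in parentheses.
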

\begin{proof}
Following a basic sample complexity analysis of model-based RL
that utilizes Hoeffding's inequality and the union bounds \citep{agarwal2019reinforcement},
we can obtain the sample complexity
for learning a probability distribution $P(\bX|\bY)$ as,
$$N \geq {|\bX||\bY|}/{\epsilon^2} \cdot \log ({2 |\bY|}/{\delta}).$$
Then, the desired results can be obtained by 
adjusting the sets $\bX$ and $\bY$ appropriately,
i.e., $\bX = \bS'_{K+1}$ and $\bY = \bS \cup \text{Eff}(\bA)$.
The full proof can be found in the supplementary material.
\end{proof}
$N_p$ and $N_{\sigma}$ 
highlight the improved sample complexity
due to the factored MDP structure as well as
the non-interacting effects between projected action spaces,
compared with the sample complexity 
of the full dynamics model of the non-separable MDPs,
$N \geq |\bS|^2|\bA|/\epsilon^2 \cdot \log (2 |\bS||\bA|/\delta)$.

\section{ACTION DECOMPOSED RL FRAMEWORK}
In the previous section, 
we investigated theoretical properties 
around Q-function decomposition in factored action spaces
and its advantage in a tabular model-based RL setting,
which implicitly assumes 
that the intervention policies can be learned in 
a well-structured discrete state space 
such that we know how the state variables change
under the influence of actions.

In practical environments, 
such assumptions don't hold  if the state space is continuous, unstructured, or high-dimensional.
Therefore, we take the essential ideas developed in 
the previous section, 
and
present a practical and general scheme that augments two components 
for critic learning in model-free value-based RL algorithms.

First, we use 
the projected Q-functions $Q_{\pi_k}(\bs, \ba_k)$
as a basis 
for approximating
the Q-function $Q_{\pi}(\bs, \ba)$.
Let's consider a function class 
$\cF$ 
be comprised of the 
projected Q-functions $Q_{\pi_k}(\bs, \ba_k)$,
$$\cF:=\{ \cQ_{\pi_k}(\bs, \ba_k) \mid k \in [1..K]\}.$$
A linear combination of the projected Q-functions
can be written as
$\tilde{Q}_{\pi}(\bs, \ba)
= 
\sum_{k=1}^{K} w_k Q_{\pi_k}(\bs, \ba_k)$
for some coefficients 
$\{w_k \mid k \in [1..K]\}$,
and 
more general combinations 
using neural networks can be written as
\small
\begin{equation}
\label{eq:q_dec_nn}
\tilde{Q}_{\pi}(\bs, \ba)
= 
F(Q_{\pi_1}\big(\bs, \ba_1), \ldots, Q_{\pi_K}(\bs, \ba_K)\big)
.    
\end{equation}
\normalsize
The second component is a data augmentation procedure 
that learns the dynamics models of the projected MDPs 
and the reward model and using them to generate samples 
while training individual projected Q-functions $Q_{\pi_k}(\bs, \ba_k)$.
Next, we present two concrete algorithms
that modify the critic learning steps in DQN and BCQ.

\begin{algorithm}[t]
  \caption{Action-Decomposed DQN}
  \label{alg:addqn}
  \begin{footnotesize}
  \begin{algorithmic}[1]
    \For {episode  = 1 to num-episodes}
        \State $\bs \leftarrow$ initial state from environment
        \For {step  = 1 to episode-length}
            \Statex{\phantom{xx} \textsc{//Collect Samples}}
            \If {random $< \epsilon$}
                \State draw a random action $\ba$
                \Statex \phantom{xxxxxxxx} with probability $p$, draw a projected action
            \Else
                \State select actions $[\ba_1,\ldots,\ba_K]$ from $Q_{\pi_k}(\bs, \ba_k)$
            \EndIf 
            \State Sample a transition $(\bs, \ba, \bs', r)$ from environment
            \State Store the sample to replay buffer $D$
            \Statex \phantom{xxxxx} if action was $\ba_k$, store the sample to buffer $D_k$ 
            \State $\bs \leftarrow \bs'$
            \Statex{\phantom{xx} \textsc{//Train Q-networks}}
            \State Sample a batch $B$ from buffer $D$
            \For{k= 1 to K}
                \State Modify $B$ to $B_k$ to follow $P(\bS'|\bS, do(\bA_k))$
                \Statex  \phantom{xxxxxxxxxx} using learned dynamics and reward model
                \State Train $Q_{\pi_k}(\bs, \ba_k)$ with $B_k$
            \EndFor
            \State Train $\tilde{Q}_{\pi}(\bs, \ba)$ with $B$
        \EndFor
        \State Update target network
        \Statex{\textsc{//Train Dynamics and Reward Models}}
        \State Sample a batch $B$ from $D$, train reward model
        \For{k = 1 to K}
            \State Sample a batch $B_k$ from $D_k$, train dynamics
        \EndFor
        
    \EndFor 
  \end{algorithmic}
  \end{footnotesize}
\end{algorithm}
\subsection{Action Decomposed DQN}
One of the main challenges in DQN-style algorithms 
is handling large discrete action spaces,
especially when the size of the action space
grows in a combinatorial manner.
In general, value decomposition approaches
introduce local value functions and 
those approaches perform action selection in a local manner
to avoid enumerating all possible combinations of actions.
Here, we leverage the projected Q-functions in factored action spaces
to improve the sample efficiency of DQN.

Algorithm \ref{alg:addqn} 
presents our proposed modification to DQN, 
which we call action decomposed DQN (AD-DQN).
Given a factored action space $\cA=\cA_1 \times \cdots \times \cA_K$,
the Q-network 
$\tilde{Q}_{\pi}(\bs, \ba)$
in AD-DQN 
combines $K$ sub Q-networks, 
each corresponding to $Q_{\pi_k}(\bs, \ba_k)$.
The $F$ is a mixer network \citep{rashid2020monotonic}
that implements Eq.~\eqref{eq:q_dec_nn}.
The mixer network could be a linear layer, or a non-linear MLP 
for combining 
the values of $Q_{\pi_k}(\bs, \ba_k)$ to yield the final outcome of $\tilde{Q}_{\pi}(\bs, \ba)$.

During training (line 11-15),
we first sample a mini-batch $B$ from a replay buffer $D$.
For training sub Q-networks $Q_{\pi_k}(\bs, \ba_k)$,
we modify the samples in $B$ to follow the transition dynamics 
$P\big(\bS'|\bS, do(\bA_k)\big)$ of the 
projected MDP $\cM^k$ with learned dynamics models of the projected MDPs (line 13).
Namely, for a transition $(\bs, \ba, \bs', r) \in B$,
we generate $(\bs, \ba_k, \tilde{\bs}', \tilde{r})$
by projecting $\ba$ to $\ba_k$, sampling 
$\tilde{\bs'} \sim P\big(\bS'|\bs, do(\ba_k)\big)$,
and  evaluating the learned reward model, 
$\tilde{r} = R(\bs, \ba_k, \tilde{\bs}')$.
Next,
we train the full network $\tilde{Q}_{\pi}(\bs, \ba)$ 
including the mixer $F$ that combines the values of $Q_{\pi_k}(\bs, \ba_k)$ (line 15).
While collecting samples or training the networks,
we select actions $\ba$ by concatenating  the projected actions $[\ba_1, \ldots, \ba_K]$ 
using the global Q-network,
$\ba_k \leftarrow \arg\max_{\bA_k} \tilde{Q}_{\pi}(\bs, \bA_k)$.

Note that the steps for training dynamics and reward models 
make AD-DQN a model-based RL algorithm (line 17-19). 
When we have access to the intervention policies,
the dynamics model can be further simplified to a single \textit{no-op}
dynamics model $P\big(\bS'|\bS, \text{Eff}(\bA)\big)$,
which is sufficient to generate $\tilde{\bs}'$.
However, 
it may be difficult to learn such intervention policies in practical RL environments.
Therefore, AD-DQN learns 
the dynamics model for the projected MDPs 
using the samples generated in the projected action spaces (line 5),
Then, it generates the synthetic data $(\bs, \ba_k, \tilde{\bs}', \tilde{r})$
from the samples $(\bs, \ba, \bs, r)$  obtained from the environment
to train $Q_{\pi_k}(\bs, \ba_k)$.

\begin{algorithm}[t]
  \caption{Action-Decomposed BCQ}
  \label{alg:adbcq}
  \begin{footnotesize}
  \begin{algorithmic}[1]
  \Statex{\textsc{//Train Dynamics and Reward Models}}
  \State Train a reward model with offline data $D$
  \For{k = 1 to K}
    \State Collect $D_k$ with projected actions $\ba_k$ from $D$
    \State Train dynamics model with $D_k$
  \EndFor
  
  \For{t = 1 to max-training}
    \Statex{\phantom{} \textsc{//Train Q-networks and Generative Models}}
    \State Sample a batch $B$ from $D$
    \For{k=1 to K}
        \State Modify $B$ to $B_k$ to follow $P(\bS'|\bS,do(\bA_k))$
        \State Train $Q_{\pi_k}(\bs, \ba_k)$ and $G_{k}(\ba_k|\bs)$ for BCQ
    \EndFor
    \State Train $Q_{\pi}(\bs, \ba)$ and $G(\ba|\bs)$ for BCQ
    \State Update target network
  \EndFor
  \end{algorithmic}
  \end{footnotesize}
\end{algorithm}
\subsection{Action Decomposed BCQ}
In off-policy RL approaches 
\citep{lange2012batch,levine2020offline,uehara2022review}, 
a behavior policy generates a batch of samples 
that are uncorrelated from the distributions under the evaluation policy.
\citet{fujimoto2019off}
showed that such a discrepancy induces extrapolation error
that severely degrades  performance 
in offline as well as online environments.
The mismatch between 
the data distribution and the target distribution
introduces a selection bias 
since the empirical mean is taken over the batch, 
and 
it also induces a large variance if the importance sampling method is employed in the region where the inverse propensity score is large.
Therefore,  critic learning in offline RL algorithms
has an additional challenge that stems from
limitations around sample collection,
which calls for  methods that improve sample efficiency.

Algorithm \ref{alg:adbcq}
is another variation of critic learning tailored to BCQ, 
which we call action decomposed BCQ (AD-BCQ).
AD-BCQ also introduces small changes to the base algorithm BCQ,
similar in the way AD-DQN modifies DQN.
We augment the steps for learning the dynamics models of the projected MDPs,
and
train the deep Q-networks $\tilde{Q}_{\pi}(\bs, \ba)$ 
that combines sub Q-networks $Q_{\pi_k}(\bs, \ba_k)$.
Since BCQ is an offline RL algorithm, there is no step for collecting online samples.
Therefore, we preprocess the whole collected data $D$
and train all the dynamics models and the reward model before training Q-networks (line 1-4).
For training Q-networks,
we sample a mini-batch $B$ from the dataset $D$,
and modify it as $B_k$ to follow the transitions in the projected MDPs 
for training 
the projected Q-networks $Q_{\pi_k}(\bs, \ba_k)$
and the generative models $G_k(\ba_K|\bs)$ (line 7-9). 
After training projected Q-networks $Q_{\pi_k}(\bs, \ba_k)$,
we train the global deep Q-network including the mixer (line 10).

\section{EXPERIMENTS}
We conduct experiments in 2D-point mass control environment from deep mind control suite \citep{tassa2018deepmind}
and MIMIC-III sepsis treatment environment \citep{komorowski2018artificial,goldberger2000physiobank,johnson2016mimic,killian2020empirical}
for evaluating AD-DQN and AD-BCQ, respectively.
Both RL environments are close to the real-world problems and 
they also have well-defined factored action spaces.
In addition, there are the state-of-the-art algorithms that utilize the linear Q-function decomposition,
which we can compare to see the improvements in sample efficiency.

\begin{figure*}
\centering
\begin{subfigure}[b]{0.31\textwidth}
  \includegraphics[width=1.0\textwidth]{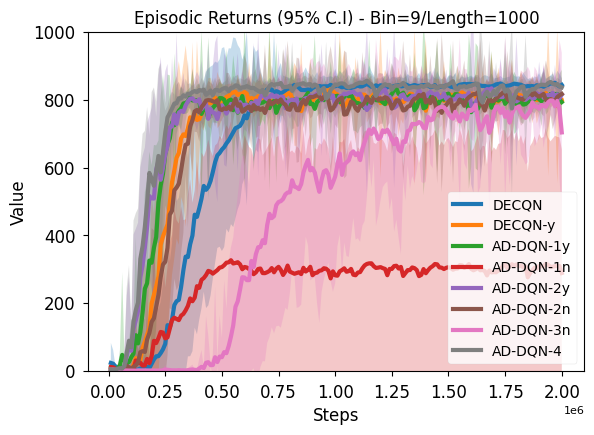}
  \caption{9x9 Action Space}
\end{subfigure}
\begin{subfigure}[b]{0.31\textwidth}
  \includegraphics[width=1.0\textwidth]{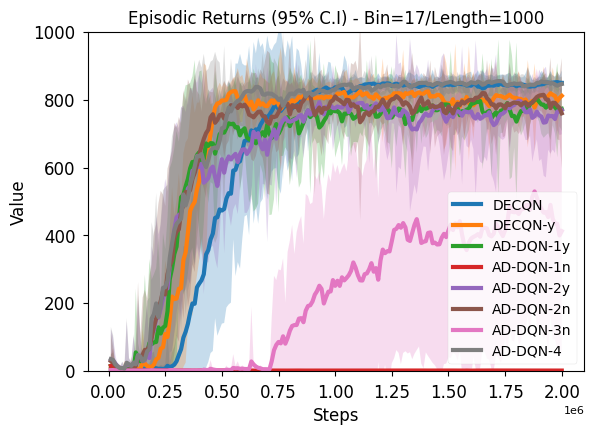}
  \caption{17x17 Action Space}
\end{subfigure}
\begin{subfigure}[b]{0.31\textwidth}
  \includegraphics[width=1.0\textwidth]{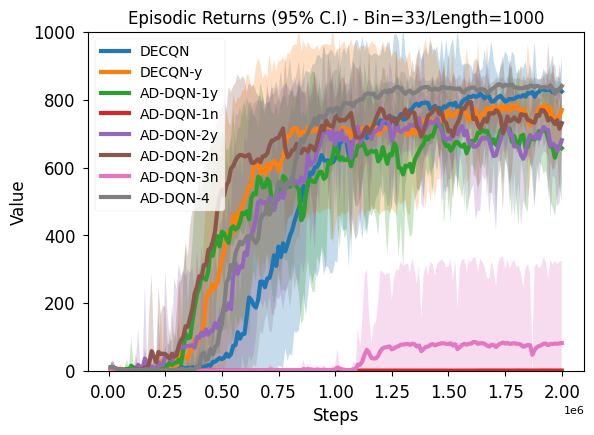}
    \caption{33x33 Action Space}
\end{subfigure}
\caption{Comparing the values in the test environment on 
three action spaces. 
The X-axis shows 2 million steps of training (2000 episodes) and the Y-axis is the value evaluated from a test environment.
The plot aggregates 10 trials with the random seeds from 1 to 10 for training and 1001 to 1010 for testing.
}
\label{fig:2d_point_mass_results}
\end{figure*}

\begin{table}
\footnotesize
\caption{
Algorithm configurations in 2D-Point mass control experiment.
}
\centering
\begin{tabular}{lccc}
\toprule
Algorithms & Weights & Mixer & Data aug\\
\midrule
DECQN & shared & average & no\\
DEQCN-y & shared & average & yes\\
AD-DQN-1y & shared & ReLu & yes\\
AD-DQN-1n & shared & ReLu & no\\
AD-DQN-2y & shared & 2 linear layers & yes\\
AD-DQN-2n & shared & 2 linear layers & no\\
AD-DQN-3n & not shared & 2 linear layers & no\\
AD-DQN-4 & shared & 2 linear layers & yes\\
\bottomrule
 \end{tabular}
 \label{table:2d_point_mass}
\vspace{-1em}
\end{table}

\subsection{2D Point-Mass Control}
We evaluate AD-DQN and 
the decoupled Q-networks (DECQN) \citep{seyde2022solving}
\footnote{We also evaluated variations of DQN and SAC after flattening the factored action space, 
and they all failed to learn Q-functions due to the large number of action labels.
}.
For both algorithms,
we discretized the action space of the continuous actions
to apply DQN-style algorithms.
In 2D point-mass control task,
the state space is 4 dimensional continuous space 
comprised of the position and velocity in the $x$ and $y$ axis on a plane 
,
and 
the action space is 2 dimensional factored action space
that defines $Q_{\pi_x}$ and $Q_{\pi_y}$.
For measuring the performance,
we evaluated average episodic returns
in a separate evaluation environment for 10 trials.

\paragraph{Algorithm Configurations}
In the experiment, we evaluated
various algorithm configurations by varying the hyperparameters. 
Table \ref{table:2d_point_mass} 
summarizes all configurations.
All configurations except for the AD-DQN-3n 
shares the weights of $Q_{\pi_x}$ and $Q_{\pi_y}$.
The mixer combines the two decomposed value functions
by a simple average, passing a 3 layer MLP with ReLu,
or 2 linear layers.
Data augmentation learns the dynamics for each projected MDP
and 
synthesize the samples for training $Q_{\pi_x}$ and $Q_{\pi_y}$.
All algorithm configurations 
select 
the actions from each head of 
$Q_{\pi_x}$ and $Q_{\pi_y}$ and 
if a mixer is used, 
a greedy action is selected by evaluating the outcome of the mixer. 
AD-DQN-4 follows AD-DQN but 
it switches to the pure model-free setting when 
the evaluated value of the current greedy policy is greater than 500.

\paragraph{Results}
Figure \ref{fig:2d_point_mass_results}
shows the episodic returns in 3 action spaces 
by increasing the number of discrete bins from 9 to 33.
First of all, DECQN can be seen as 
one of the algorithm configuration of 
AD-DQN 
with a mixer network 
that aggregates the value with a simple average 
without performing the data augmentation step.
Overall, we see that the weight sharing between
two projected Q-networks performs better,
and the mixer network without the non-linear ReLu activation
performs the best.
The best algorithm configuration is AD-DQN-4,
but other configurations also 
improve the speed of convergence
compared with the baseline DECQN (blue).

\begin{figure*}[t]
\centering
\begin{subfigure}[b]{0.32\textwidth}
  \includegraphics[width=1.0\textwidth]{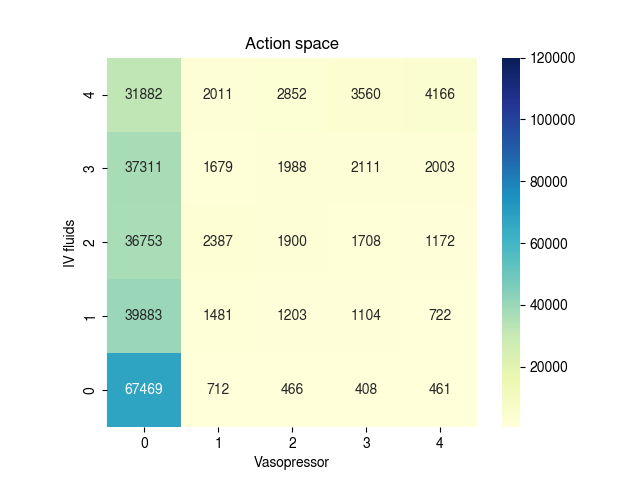}
  \caption{5x5 Action Space}
  \label{f3a}
\end{subfigure}
\begin{subfigure}[b]{0.32\textwidth}
  \includegraphics[width=1.0\textwidth]{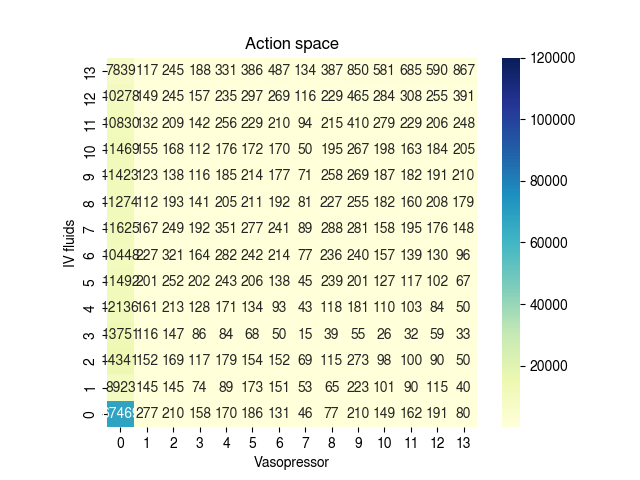}
  \caption{14x14 Action Space}
  \label{f3b}
\end{subfigure}
\begin{subfigure}[b]{0.25\textwidth}
  \includegraphics[width=1.0\textwidth]{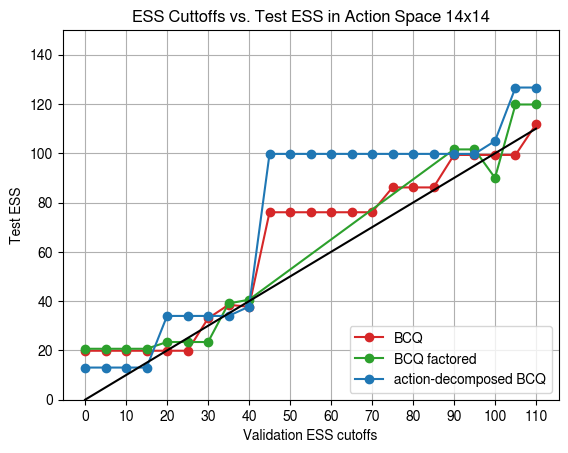}
  \caption{Model selection score 14x14}
  \label{f3c}
\end{subfigure}
\begin{subfigure}[b]{0.29\textwidth}
  \includegraphics[width=1.0\textwidth]{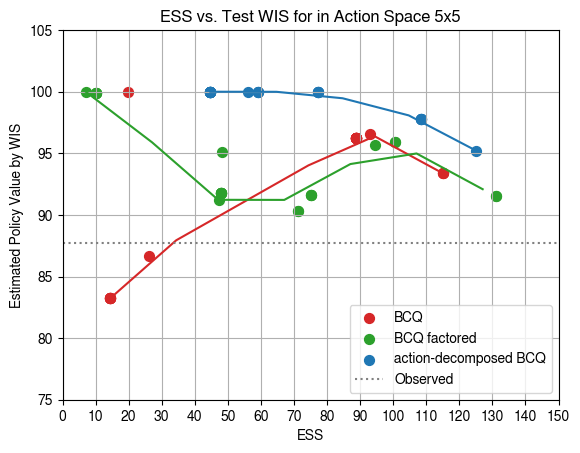}
  \caption{Performance score 5x5}
  \label{f3d}
\end{subfigure}
\begin{subfigure}[b]{0.29\textwidth}
  \includegraphics[width=1.0\textwidth]{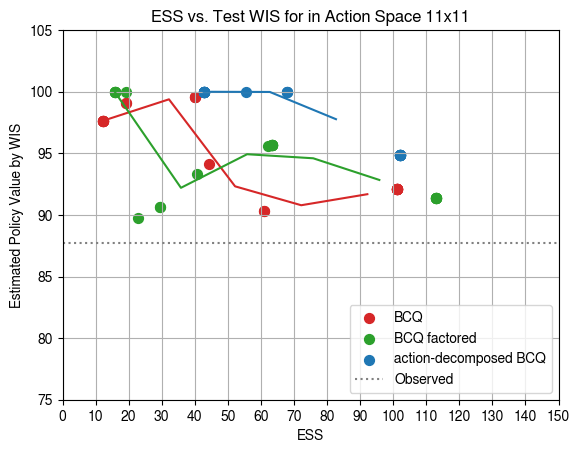}
  \caption{Performance score 11x11}
  \label{f3e}
\end{subfigure}
\begin{subfigure}[b]{0.29\textwidth}
  \includegraphics[width=1.0\textwidth]{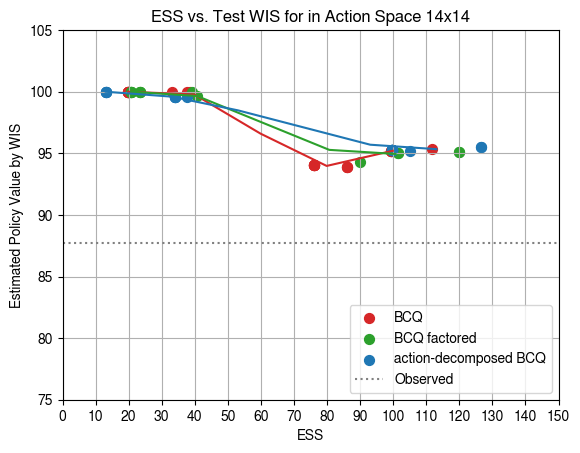}
  \caption{Performance score 14x14}
  \label{f3f}
\end{subfigure}
\caption{
Figure \ref{f3a} and \ref{f3b}
visualize the number of samples for two discrete action spaces
in the training set. 
The offline RL dataset is split into  
70\% training, 15\% for validation, and 15\% test sets.
Figure \ref{f3c} shows the test effective sample size (ESS) score of the policy functions 
selected by the validation ESS score.
Figure \ref{f3d}--\ref{f3f}
show the performance score evaluated from the test set.
The AD-BCQ 
clearly improves upon two baselines, BCQ and factored-BCQ.  
}
\label{fig:mimic_3_results}
\vspace{-1em}
\end{figure*}

\subsection{MIMIC-III Sepsis Treatment}
We evaluate AD-BCQ and factored BCQ \citep{tang2022leveraging}
in the sepsis treatment environment derived 
from MIMIC-III database version 1.4 
\citep{goldberger2000physiobank,johnson2016mimic,komorowski2018artificial}.
Processing the MIMIC-III dataset, 
we obtained a cohort of 177,877 patients and applied 70/15/15 split for training, validation and testing. Each patient has 5 non-time varying demographic features and 33 time varying features collected at 4-hour intervals over 72 hours that make up a discrete time-series. 
The reward is assigned 100 for the discharged state,  -100 for the mortality, and 0 for otherwise.
Among possible choices for the state representation learning 
\citep{killian2020empirical},
we used the approximate information state encoder \citep{subramanian2022approximate}
and obtained 
64 dimensional state space.
The action space is comprised of two continuous variables
for the total volume of intravenous fluid and the amount of vassopressors treated.
We experimented in the action spaces 
by varying the number of bins for the discretization from 5x5 to 14x14.
of those two continuous variables 
by dividing the quantiles  evenly, 
as shown in Figure \ref{f3a} and \ref{f3b}.
When we increase the number of bins, 
we can see that the number of transition samples available for learning decreases, 
which makes offline RL algorithms more challenging.

\paragraph{Performance Measure}
Let $\pi$ be a policy under 
the evaluation and 
$\pi_b$ be a observed behavioral policy in the offline data.
Given an episode $(s_1, a_1, r_1, \ldots, s_L, a_L, r_L)$ of length $L$,
the per-step importance ratio
is 
$\rho_t = {\pi(a_t | s_t)}/{ \pi_b(a_t | s_t})$
and
the cumulative importance ratio 
is 
$\rho_{1:t} = \prod_{t'=1}^{t} \rho_{t'}$.
Given $m$ episodes,
the off-policy evaluation value from the weighted importance sampling (WIS) can be 
computed as follows.
$
    \hat{V}_{\text{WIS}}(\pi)
    =
    1/m
    \sum_{j=1}^m
        \rho^{(j)}_{1:L^{(j)}}
    /
        w_{L^{(j)}}
    \Big(
        \sum_{t=1}^{L^{(j)}} \gamma^{t-1} r_t^{(j)}
    \Big),
$
where
$w_t = {1}/{m}\sum_{j=1}^m \rho_{1:t}^{(j)}$
is the average cumulative importance ratio at time step $t$.
The superscript $(j)$ indicates that 
the length $L^{(j)}$, rewards $r_t^{(j)}$,
and the accumulated importance ratio $\rho_{1:t}^{(j)}$
are computed relative to the $j$-th episode.
The effective sample size (ESS) of the importance sampling can be computed as $
    \text{ESS} = {
    (\sum_{t=1}^N w_t)^2
    }/{
    \sum_{t=1}^N w_t^2
    },
$
where the weights are not normalized \cite{martino2016alternative}.
While computing the weights, we clip the values by $1,000$
and
softened $\pi$
by 
mixing a random policy,
namely
$\hat{\pi}(a|s) = \mathbb{I}_{a=\pi(s)} (1-\epsilon)
+ \mathbb{I}_{a\neq \pi(s)} ({\epsilon}/{|\cA|-1})
$ 
with $\epsilon=0.01$ following \citep{tang2021model}.

\paragraph{Evaluation Results}
We follow the evaluation protocol 
using the open source environment 
offered by \cite{tang2021model,tang2022leveraging}.
For all hyperparameters swept and 20 random trials,
we select the best model with the highest WIS in the validation set
given 
a minimum ESS cutoff value as shown in the black line in Figure \ref{f3c}.
Using the selected model,
we evaluate its WIS and ESS in the test set for the final comparison.
In terms of the model selection score, we see that
AD-BCQ (blue) shows the higher ESS scores compared with the baselines.
Figure \ref{f3d}--\ref{f3f}
show the test WIS scores on the Y-axis, 
and the ESS score on the X-axis. 
The observed value of the clinician's policy employed to 
the patients is 87.75 in the test set, shown in the black dotted lines.
Since this data set has only a single reward
at the end of each episode
and the discounting factor is 1.0 for the evaluation,
this value is the average of the rewards of episodes.

Each algorithm was trained on 20 different random seeds, 
and we see that the contours from AD-BCQ in Figure \ref{f3d}--\ref{f3f}
forms a Pareto frontier of the other two algorithms,
confirming that AD-BCQ learns policy functions that dominates the performance of the baselines
in various action spaces.

\section{Related Works and Discussion}
\citet{tang2022leveraging} presented a method that decomposes Q-function into a linear combination of Q-functions, one per action variable in a factored action space. 
The paper showed a theoretical analysis on the bias and variance of such approximation.
\citet{pitis2022mocoda} considered structural causal model in a factored MDP setting for designing data augmentation algorithms for offline reinforcement learning. 
\citet{luczkow2021structural} studied how to apply structural causal models in reinforcement learning. 
Both approaches failed to capture the causal semantics of the intervention and concluded that the intervention introduces global change.
Based on our study, global influence can be captured by the default dynamics of the system via the \textit{no-op} action in the action model. 

\paragraph{Limitations}
There are several limitations that we want to highlight. 
The proposed approach assumes that we have access to a causal representation or disentangled state variables.
In this work, we do not address how to learn disentangled representation from high-dimensional feature spaces \cite{mitrovic2020representation,brehmer2022weakly,huang2022action}.
In control environments or real-world problems such as medical domains, it is common to see that such representation is already available, and most of the time the action models are also well defined.
Another limitation of our work, especially in the MIMIC-III sepsis treatment experiment, is that the proposed method does not guarantee the improved outcome on the actual situation and the result must be carefully examined by domain experts.

\section{CONCLUSION}
In this paper, we study the decomposable action structure in factored MDPs
with the intervention semantics in causality.
The primary advantage of proposed Q-function decomposition approach 
is shown to improve the sample efficiency in the presence of large discrete action spaces,
one of the major challenges in reinforcement learning.
The theoretical investigation shows that 
we can extend the Q-function decomposition scheme 
from the fully separable MDPs to non-separable MDPs
if the effects of the projected action spaces are non-interacting.
In addition, 
if the underlying Q-function is monotonic, 
a series of local improvements along with the projected action spaces
still returns the optimal policy.

Transferring such findings in ideal settings 
to a more practical DQN-style algorithms, 
we present action decomposed reinforcement learning framework 
that improves the critic learning procedure and demonstrated the improved sample efficiency 
in realistic online and offline environments.

In future work, there are several interesting research directions. 
First, we could jointly learn the causal state representations and 
the causal mechanisms to further improve the sample efficiency.
Second, a more thorough analysis would enhance the theoretical understanding of the Q-function decomposition. 
Last, the extension of problem settings to the `unobserved confounder setting' will present considerable research challenges, 
while also enabling wider adoption to real world applications.

\section*{Acknowledgments}
We thank anonymous reviewers for their feedback to improve the paper. 
We thank the authors of the open source projects that foster rapid prototyping of algorithms and experiments. 
For 2D Point-Mass Control experiment, 
we used the benchmark offered by \citet{tassa2018deepmind}.
For MIMIC-III Sepsis Treatment experiments,  
we follow the earlier work by \citet{komorowski2018artificial}, \citet{killian2020empirical}, \citet{tang2021model}, and \citet{tang2022leveraging} 
for setting up experiment environments and evaluation protocols for 
offline reinforcement learning algorithms.
Especially, we thank to the authors who made baseline algorithms available in open source projects.
We implemented {Action Decomposed DQN} algorithm on top of \texttt{CleanRL} by \citet{huang2022cleanrl}, and 
we extended \texttt{OfflineRL FactoredActions} by \citet{tang2022leveraging}
while implementing Action Decomposed BCQ.
Elliot Nelson and Songtao Lu contributed in this work when they were affiliated with IBM Research.
\bibliography{refs}

\begin{thebibliography}{}

\bibitem[Agarwal et~al., 2019]{agarwal2019reinforcement}
Agarwal, A., Jiang, N., Kakade, S.~M., and Sun, W. (2019).
\newblock Reinforcement learning: Theory and algorithms.
\newblock {\em CS Dept., UW Seattle, Seattle, WA, USA, Tech. Rep}, 32:96.

\bibitem[Brehmer et~al., 2022]{brehmer2022weakly}
Brehmer, J., De~Haan, P., Lippe, P., and Cohen, T.~S. (2022).
\newblock Weakly supervised causal representation learning.
\newblock In {\em Advances in Neural Information Processing Systems}.

\bibitem[Dulac-Arnold et~al., 2015]{dulac2015deep}
Dulac-Arnold, G., Evans, R., van Hasselt, H., Sunehag, P., Lillicrap, T., Hunt, J., Mann, T., Weber, T., Degris, T., and Coppin, B. (2015).
\newblock Deep reinforcement learning in large discrete action spaces.
\newblock {\em arXiv preprint arXiv:1512.07679}.

\bibitem[Fujimoto et~al., 2019]{fujimoto2019off}
Fujimoto, S., Meger, D., and Precup, D. (2019).
\newblock Off-policy deep reinforcement learning without exploration.
\newblock In {\em International conference on machine learning}, pages 2052--2062. PMLR.

\bibitem[Gao et~al., 2024]{gao2024causal}
Gao, C., Zheng, Y., Wang, W., Feng, F., He, X., and Li, Y. (2024).
\newblock Causal inference in recommender systems: A survey and future directions.
\newblock {\em ACM Transactions on Information Systems}, 42(4):1--32.

\bibitem[Goldberger et~al., 2000]{goldberger2000physiobank}
Goldberger, A.~L., Amaral, L.~A., Glass, L., Hausdorff, J.~M., Ivanov, P.~C., Mark, R.~G., Mietus, J.~E., Moody, G.~B., Peng, C.-K., and Stanley, H.~E. (2000).
\newblock Physiobank, physiotoolkit, and physionet: components of a new research resource for complex physiologic signals.
\newblock {\em circulation}, 101(23):e215--e220.

\bibitem[Huang et~al., 2022a]{huang2022action}
Huang, B., Lu, C., Leqi, L., Hern{\'a}ndez-Lobato, J.~M., Glymour, C., Sch{\"o}lkopf, B., and Zhang, K. (2022a).
\newblock Action-sufficient state representation learning for control with structural constraints.
\newblock In {\em International Conference on Machine Learning}.

\bibitem[Huang et~al., 2022b]{huang2022cleanrl}
Huang, S., Dossa, R. F.~J., Ye, C., Braga, J., Chakraborty, D., Mehta, K., and Araújo, J.~G. (2022b).
\newblock Cleanrl: High-quality single-file implementations of deep reinforcement learning algorithms.
\newblock {\em Journal of Machine Learning Research}, 23(274):1--18.

\bibitem[Johansson et~al., 2016]{johansson2016learning}
Johansson, F.~D., Shalit, U., and Sontag, D. (2016).
\newblock Learning representations for counterfactual inference.
\newblock In {\em 33rd International Conference on Machine Learning, ICML 2016}, pages 4407--4418. International Machine Learning Society (IMLS).

\bibitem[Johnson et~al., 2016]{johnson2016mimic}
Johnson, A.~E., Pollard, T.~J., Shen, L., Lehman, L.-w.~H., Feng, M., Ghassemi, M., Moody, B., Szolovits, P., Anthony~Celi, L., and Mark, R.~G. (2016).
\newblock Mimic-iii, a freely accessible critical care database.
\newblock {\em Scientific data}, 3(1):1--9.

\bibitem[Killian et~al., 2020]{killian2020empirical}
Killian, T.~W., Zhang, H., Subramanian, J., Fatemi, M., and Ghassemi, M. (2020).
\newblock An empirical study of representation learning for reinforcement learning in healthcare.
\newblock {\em arXiv preprint arXiv:2011.11235}.

\bibitem[Komorowski et~al., 2018]{komorowski2018artificial}
Komorowski, M., Celi, L.~A., Badawi, O., Gordon, A.~C., and Faisal, A.~A. (2018).
\newblock The artificial intelligence clinician learns optimal treatment strategies for sepsis in intensive care.
\newblock {\em Nature medicine}, 24(11):1716--1720.

\bibitem[Lange et~al., 2012]{lange2012batch}
Lange, S., Gabel, T., and Riedmiller, M. (2012).
\newblock Batch reinforcement learning.
\newblock In {\em Reinforcement learning: State-of-the-art}, pages 45--73. Springer.

\bibitem[Levine et~al., 2020]{levine2020offline}
Levine, S., Kumar, A., Tucker, G., and Fu, J. (2020).
\newblock Offline reinforcement learning: Tutorial, review, and perspectives on open problems.
\newblock {\em arXiv preprint arXiv:2005.01643}.

\bibitem[Luczkow, 2021]{luczkow2021structural}
Luczkow, V. (2021).
\newblock {\em Structural Causal Models for Reinforcement Learning}.
\newblock McGill University (Canada).

\bibitem[Martino et~al., 2016]{martino2016alternative}
Martino, L., Elvira, V., and Louzada, F. (2016).
\newblock Alternative effective sample size measures for importance sampling.
\newblock In {\em 2016 IEEE Statistical Signal Processing Workshop (SSP)}, pages 1--5. IEEE.

\bibitem[Mitrovic et~al., 2021]{mitrovic2020representation}
Mitrovic, J., McWilliams, B., Walker, J., Buesing, L., and Blundell, C. (2021).
\newblock Representation learning via invariant causal mechanisms.
\newblock In {\em International Conference on Learning Representation}.

\bibitem[Mnih et~al., 2015]{mnih2015human}
Mnih, V., Kavukcuoglu, K., Silver, D., Rusu, A.~A., Veness, J., Bellemare, M.~G., Graves, A., Riedmiller, M., Fidjeland, A.~K., Ostrovski, G., et~al. (2015).
\newblock Human-level control through deep reinforcement learning.
\newblock {\em Nature}, 518(7540):529--533.

\bibitem[Pearl, 2009]{pearl2009causality}
Pearl, J. (2009).
\newblock {\em Causality}.
\newblock Cambridge University Press.

\bibitem[Pearl, 2019]{10.1145/3241036}
Pearl, J. (2019).
\newblock The seven tools of causal inference, with reflections on machine learning.
\newblock {\em Communications of the ACM}, 62(3):54–60.

\bibitem[Pedregosa et~al., 2011]{scikit-learn}
Pedregosa, F., Varoquaux, G., Gramfort, A., Michel, V., Thirion, B., Grisel, O., Blondel, M., Prettenhofer, P., Weiss, R., Dubourg, V., Vanderplas, J., Passos, A., Cournapeau, D., Brucher, M., Perrot, M., and Duchesnay, E. (2011).
\newblock Scikit-learn: Machine learning in {P}ython.
\newblock {\em Journal of Machine Learning Research}, 12:2825--2830.

\bibitem[Pitis et~al., 2020]{pitis2020counterfactual}
Pitis, S., Creager, E., and Garg, A. (2020).
\newblock Counterfactual data augmentation using locally factored dynamics.
\newblock In {\em Advances in Neural Information Processing Systems}.

\bibitem[Pitis et~al., 2022]{pitis2022mocoda}
Pitis, S., Creager, E., Mandlekar, A., and Garg, A. (2022).
\newblock Mocoda: Model-based counterfactual data augmentation.
\newblock In {\em Advances in Neural Information Processing Systems}.

\bibitem[Rashid et~al., 2020]{rashid2020monotonic}
Rashid, T., Samvelyan, M., De~Witt, C.~S., Farquhar, G., Foerster, J., and Whiteson, S. (2020).
\newblock Monotonic value function factorisation for deep multi-agent reinforcement learning.
\newblock {\em The Journal of Machine Learning Research}, 21(1):7234--7284.

\bibitem[Rebello et~al., 2023]{rebello2023leveraging}
Rebello, A.~P., Tang, S., Wiens, J., and Parbhoo, S. (2023).
\newblock Leveraging factored action spaces for off-policy evaluation.
\newblock In {\em ICML Workshop on New Frontiers in Learning, Control, and Dynamical Systems}.

\bibitem[Rubin, 1974]{rubin1974estimating}
Rubin, D.~B. (1974).
\newblock Estimating causal effects of treatments in randomized and nonrandomized studies.
\newblock {\em Journal of Educational Psychology}, 66(5):688.

\bibitem[Rubin, 2005]{rubin2005causl}
Rubin, D.~B. (2005).
\newblock Causal inference using potential outcomes: Design, modeling, decisions.
\newblock {\em Journal of the American Statistical Association}, 100(469):322--331.

\bibitem[Russell and Zimdars, 2003]{russell2003q}
Russell, S.~J. and Zimdars, A. (2003).
\newblock Q-decomposition for reinforcement learning agents.
\newblock In {\em Proceedings of the 20th International Conference on Machine Learning (ICML-03)}, pages 656--663.

\bibitem[Saito et~al., 2023]{saito2023offpolicy}
Saito, Y., Ren, Q., and Joachims, T. (2023).
\newblock Off-policy evaluation for large action spaces via conjunct effect modeling.
\newblock In {\em Proceedings of the 40th International Conference on Machine Learning}, ICML'23. JMLR.org.

\bibitem[Sch{\"o}lkopf, 2022]{scholkopf2022causality}
Sch{\"o}lkopf, B. (2022).
\newblock Causality for machine learning.
\newblock In {\em Probabilistic and Causal Inference: The Works of Judea Pearl}, pages 765--804.

\bibitem[Schulte and Poupart, 2024]{schulte2024online}
Schulte, O. and Poupart, P. (2024).
\newblock Why online reinforcement learning is causal.
\newblock {\em arXiv preprint arXiv:2403.04221}.

\bibitem[Seyde et~al., 2022]{seyde2022solving}
Seyde, T., Werner, P., Schwarting, W., Gilitschenski, I., Riedmiller, M., Rus, D., and Wulfmeier, M. (2022).
\newblock Solving continuous control via {Q}-learning.
\newblock In {\em Proceedings of the International Conference on Learning Representations}.

\bibitem[Silver et~al., 2017]{silver2017mastering}
Silver, D., Schrittwieser, J., Simonyan, K., Antonoglou, I., Huang, A., Guez, A., Hubert, T., Baker, L., Lai, M., Bolton, A., et~al. (2017).
\newblock Mastering the game of {Go} without human knowledge.
\newblock {\em Nature}, 550(7676):354--359.

\bibitem[Son et~al., 2019]{son2019qtran}
Son, K., Kim, D., Kang, W.~J., Hostallero, D.~E., and Yi, Y. (2019).
\newblock Qtran: Learning to factorize with transformation for cooperative multi-agent reinforcement learning.
\newblock In {\em International conference on machine learning}, pages 5887--5896. PMLR.

\bibitem[Subramanian et~al., 2022]{subramanian2022approximate}
Subramanian, J., Sinha, A., Seraj, R., and Mahajan, A. (2022).
\newblock Approximate information state for approximate planning and reinforcement learning in partially observed systems.
\newblock {\em Journal of Machine Learning Research}, 23(12):1--83.

\bibitem[Sunehag et~al., 2018]{sunehag2018value}
Sunehag, P., Lever, G., Gruslys, A., Czarnecki, W.~M., Zambaldi, V., Jaderberg, M., Lanctot, M., Sonnerat, N., Leibo, J.~Z., Tuyls, K., et~al. (2018).
\newblock Value-decomposition networks for cooperative multi-agent learning based on team reward.
\newblock In {\em International Conference on Autonomous Agents and MultiAgent Systems}.

\bibitem[Sutton and Barto, 2018]{sutton2018reinforcement}
Sutton, R.~S. and Barto, A.~G. (2018).
\newblock {\em Reinforcement Learning: An Introduction}.
\newblock MIT press.

\bibitem[Tang et~al., 2022]{tang2022leveraging}
Tang, S., Makar, M., Sjoding, M., Doshi-Velez, F., and Wiens, J. (2022).
\newblock Leveraging factored action spaces for efficient offline reinforcement learning in healthcare.
\newblock {\em Advances in Neural Information Processing Systems}, 35:34272--34286.

\bibitem[Tang and Wiens, 2021]{tang2021model}
Tang, S. and Wiens, J. (2021).
\newblock Model selection for offline reinforcement learning: Practical considerations for healthcare settings.
\newblock In {\em Machine Learning for Healthcare Conference}, pages 2--35. PMLR.

\bibitem[Tang and Wiens, 2023]{tang2023counterfactual}
Tang, S. and Wiens, J. (2023).
\newblock Counterfactual-augmented importance sampling for semi-offline policy evaluation.
\newblock {\em Advances in Neural Information Processing Systems}, 36:11394--11429.

\bibitem[Tassa et~al., 2018]{tassa2018deepmind}
Tassa, Y., Doron, Y., Muldal, A., Erez, T., Li, Y., Casas, D. d.~L., Budden, D., Abdolmaleki, A., Merel, J., Lefrancq, A., et~al. (2018).
\newblock Deepmind control suite.
\newblock {\em arXiv preprint arXiv:1801.00690}.

\bibitem[Tavakoli et~al., 2020]{tavakoli2020learning}
Tavakoli, A., Fatemi, M., and Kormushev, P. (2020).
\newblock Learning to represent action values as a hypergraph on the action vertices.
\newblock In {\em Proceedings of the International Conference on Learning Representations}.

\bibitem[Tavakoli et~al., 2018]{tavakoli2018action}
Tavakoli, A., Pardo, F., and Kormushev, P. (2018).
\newblock Action branching architectures for deep reinforcement learning.
\newblock In {\em Proceedings of the aaai conference on artificial intelligence}, volume~32.

\bibitem[Uehara et~al., 2022]{uehara2022review}
Uehara, M., Shi, C., and Kallus, N. (2022).
\newblock A review of off-policy evaluation in reinforcement learning.
\newblock {\em arXiv preprint arXiv:2212.06355}.

\bibitem[Van~Hasselt et~al., 2016]{van2016deep}
Van~Hasselt, H., Guez, A., and Silver, D. (2016).
\newblock Deep reinforcement learning with double q-learning.
\newblock In {\em Proceedings of the AAAI Conference on Artificial Intelligence}, volume~30.

\bibitem[Wang et~al., 2020]{wang2020qplex}
Wang, J., Ren, Z., Liu, T., Yu, Y., and Zhang, C. (2020).
\newblock Qplex: Duplex dueling multi-agent q-learning.
\newblock In {\em International Conference on Learning Representations}.

\end{thebibliography}

\section*{Checklist}
 \begin{enumerate}
 \item For all models and algorithms presented, check if you include:
 \begin{enumerate}
   \item A clear description of the mathematical setting, assumptions, algorithm, and/or model. [Yes, In Appendix]
   \item An analysis of the properties and complexity (time, space, sample size) of any algorithm. [Yes, In Appendix]
   \item (Optional) Anonymized source code, with specification of all dependencies, including external libraries. [No]
 \end{enumerate}

 \item For any theoretical claim, check if you include:
 \begin{enumerate}
   \item Statements of the full set of assumptions of all theoretical results. [Yes]
   \item Complete proofs of all theoretical results. [Yes]
   \item Clear explanations of any assumptions. [Yes]     
 \end{enumerate}

 \item For all figures and tables that present empirical results, check if you include:
 \begin{enumerate}
   \item The code, data, and instructions needed to reproduce the main experimental results (either in the supplemental material or as a URL). [No]
   \item All the training details (e.g., data splits, hyperparameters, how they were chosen). [Yes, In Appendix]
         \item A clear definition of the specific measure or statistics and error bars (e.g., with respect to the random seed after running experiments multiple times). [Yes, In Appendix]
         \item A description of the computing infrastructure used. (e.g., type of GPUs, internal cluster, or cloud provider). [Yes, In Appendix] 
 \end{enumerate}

 \item If you are using existing assets (e.g., code, data, models) or curating/releasing new assets, check if you include:
 \begin{enumerate}
   \item Citations of the creator If your work uses existing assets. [Yes]
   \item The license information of the assets, if applicable. [No]
   \item New assets either in the supplemental material or as a URL, if applicable. [Not Applicable]
   \item Information about consent from data providers/curators. [Not Applicable]
   \item Discussion of sensible content if applicable, e.g., personally identifiable information or offensive content. [Not Applicable]
 \end{enumerate}

 \item If you used crowdsourcing or conducted research with human subjects, check if you include:
 \begin{enumerate}
   \item The full text of instructions given to participants and screenshots. [Not Applicable]
   \item Descriptions of potential participant risks, with links to Institutional Review Board (IRB) approvals if applicable. [Not Applicable]
   \item The estimated hourly wage paid to participants and the total amount spent on participant compensation. [Not Applicable]
 \end{enumerate}
\end{enumerate}

\newpage
\onecolumn
\appendix
\section{SUPPLEMENTARY MATERIAL}
\subsection{Missing Proofs}
\propqfunction*
\begin{proof}
The actions in $\cA_k$ only intervene on $\bS'_k$ and 
the rest of the state variables $\bS'\setminus \text{Eff}(\bA_k)$
follow the \textit{no-op} dynamics.
Namely, the state transition follows 
$$\!\!\!\!P\big(\bS_k'|\bS, do(\bA_k)\big)
P\big(\bS_{K+1}'|\bS,\text{Eff}(\bA)\big) 
\!\!\!\!\!\!\!\!
\prod_{i \in [1..K], i \neq k}
\!\!\!\!\!\!\!
P(\bS_i'|\bS)
.
$$
By definition, 
$Q_{\pi_k}(\bs, \ba_k)$
is the value of 
applying action 
$do(\bA^0\!\!=\!\!\ba_k)$ in state $\bS^0\!\!\!=\!\!\bs$ at time step $t\!\!\!=\!\!\!0$,
and applying actions $do\big(\bA^t\!\!\!\!=\!\!\pi_k(\bS^t)\big)$ 
for the remaining time steps $t\!\!=\!\!1..\infty$.
It is easy to rewrite
\begin{align}
&Q_{\pi_k}(\bs, \ba_k)
=
\sum_{\bS^1}P(\bS^1|\bs, do(\ba_k))R(\bs, \ba_k, \bS^1)
+
\sum_{t=1}^{\infty}\sum_{\bS^1..\bS^{t}}
\prod_{j=0}^{t} \gamma^{t} P(\bS^{j+1}|\bS^j, do(\pi_k(\bS^j)))
R(\bS^t, \pi_k(\bS^t),\bS^{t+1})\\
&=
\sum_{\bS^1}P(\bS^1|\bs, do(\ba_k))
\big[
R(\bs, \ba_k, \bS^1)
+
\gamma 
\sum_{t=0}^{\infty}
    \gamma^{t}
    \sum_{\bS^{1}..\bS^{t+1}}
    \prod_{j=0}^{t}
    P(\bS^{j+1}|\bS^{j}, do(\pi_k(\bS^j)))  
    R(\bS^{t}, \pi_k(\bS^{t}), \bS^{t+1})
\big]\\
&=
\sum_{\bS^1}P(\bS^1|\bs, do(\ba_k))
\big[
R(\bs, \ba_k, \bS^1)
+
\sum_{t=1}^{\infty}
\gamma^t
\sum_{\bS^{2}..\bS^{t+1}}
\prod_{j=1}^{t}
P(\bS^{j+1}|\bS^{j}, do(\pi_k(\bS^j)))  
R(\bS^{t}, \pi_k(\bS^{t}), \bS^{t+1})
\big]
\\
&=
\sum_{\bS^1}P(\bS^1|\bs, do(\ba_k))
\big[
R(\bs, \ba_k, \bS^1)
+
\gamma
Q_{\pi_k}(\bS^1, \pi_k(\bS^1))
\big]
\\
&= 
\sum_{\bs'}P(\bs'|\bs, do(\ba_k))
\big[
R(\bs, \ba_k, \bs')
+
\gamma
Q_{\pi_k}(\bs', \pi_k(\bs'))
\big],
\end{align}
as desired.    
\end{proof}
\convergncefpi*
\begin{proof}
The proof follows the convergence of the policy iteration algorithm.

\paragraph{Convergence of policy evaluation}
We show that the policy evaluation steps in line 4-7 
converges to $Q_{\pi}(\bs, \ba_k)$.
Note that $Q_{\pi}(\bs, \ba_k)$ 
is the value of applying action
$[\pi_1{\bs), \ldots, \pi_{k-1}(\bs), \ba_k, \pi_{k+1}(\bs), \ldots, \pi_K(\bs)}]$
in $\bs$ 
followed by the actions following $\pi$ in the subsequent states.
Let $\pi$ be the policy under the evaluation.
We can see that $P(\bS'|\bS, \bA_k)$ in line 5 is $P\big(\bS'|\bS, do(\bA)\big)$
since
\begin{align*}
P(\bS'|\bS, \bA_k)&=
 \mathbb{I}[\bS'_k=\sigma_{\bA_k}(\bS)]
 \prod_{i=1, i\neq k}^{K} \mathbb{I}[\bS'_i=\sigma_{\pi_i(\bS)}(\bS)]
 P\big(\bS'_{K+1}|\bS,\text{Eff}(\bA)\big)\\
 &=
 \mathbb{I}[\text{Eff}(\bA)=\sigma_{\bA}\big(\text{Pre}(\bA)\big)]
 P\big(\bS'_{K+1}|\bS,\text{Eff}(\bA)\big)\\
 &= P\big(\bS'|\bS, do(\bA)\big).
\end{align*}
Then, we can rewrite $\tilde{Q}_{\pi_k}(\bs, \ba_k)$ in Definition 2 as
\begin{align*}
\tilde{Q}_{\pi_k}(\bs, \ba_k)
&=\sum_{\bs'} 
P\big(\bs'|\bs, do(\ba)\big)
[
R(\bs, \ba, \bs') + \gamma \tilde{Q}_{\pi_k}(\bs, \ba)
]
\\
&=\sum_{\bs'} 
P\big(\bs'|\bs, do(\ba)\big)
[
R(\bs, \ba, \bs') + \gamma Q_{\pi}(\bs, \ba)
]\\
&=Q_{\pi}(\bs, \ba)
,
\end{align*}
where 
$\ba_k$ is the projected action in $\cM^k$
and 
$\ba = [\pi_1(\bs), \ldots, \pi_{k-1}(\bs), \ba_k, \pi_{k+1}(\bs), \ldots, \pi_{\bs)}]$
is the action in $\cM$ that follows $\pi$, yet it replaces $\pi_k(\bs)$ with $\ba_k$.
The policy evaluation in the factored policy iteration algorithm is equivalent 
to the policy evaluation in the synchronous policy iteration algorithm.
The only difference is that
we restrict the Q-function to vary only at the projected action variables $\bA_k$ 
instead of all action variables $\bA$.
Therefore, we can conclude that the policy evaluation in the factored policy iteration algorithm converges.

\paragraph{Local convergence of policy improvement}
The convergence of the policy improvement step (line 8-10) follows 
from the fact that the policy improvement step 
changes the factored policy $\pi=[\pi_1,\ldots,\pi_k]$
per block-wise update of the action variables 
in a finite space MDP (line 8).

\paragraph{Global convergence of policy improvement}
If the underlying $Q_{\pi}(\bs, \ba)$ is monotonic \citep{rashid2020monotonic},
then 
the block-wise improvement is equivalent to
the joint improvement,
$Q_{\pi_k}\big(\bs, \pi_k(\ba_k)\big)
> Q_{\pi_k}(\bs, \ba_k)
\iff 
Q_{\pi}\big(\bs, \pi_k(\ba)\big)
> Q_{\pi}(\bs, \ba)
$.
For any iteration of the policy improvement in the factored policy iteration algorithm, 
there exists an equivalent update in the policy iteration algorithm
due to the monotonicity of the Q-function.
Therefore, 
the policy improvement step reaches 
the same fixed point that would have reached by
the non-factored policy iteration.
\end{proof}

\complexitymbfpi*
\begin{proof}
Given $K$ random variables $X^1, \ldots, X^K$ with the domain $X^j \in [0,1]$,
and $N$ independent samples $X^j_1, X^j_2, \ldots, X^j_N$,
let $S^j_N = \sum_{i=1}^N X^j_i$
and $\epsilon > 0$.
Then,
$$
P(|S^j_N - \mathbb{E}[S^j_N]/N|\geq \epsilon) \leq 2 \exp{(-2N \epsilon^2)},
$$
by Hoeffding's inequality,
and 
$$
P( \cup_{j=1}^K |S^j_N/N - \mathbb{E}[S^j_N]/N|\geq \epsilon)
\leq 
\sum_{j=1}^K P(|S^j_N/N - \mathbb{E}[S^j_N]/N|\geq \epsilon)
\leq 2 K \exp{(-2N \epsilon^2)},
$$
by union bound.

We can restate the above inequality as,
with probability $1-\delta$,
$|S^j_N/N - \mathbb{E}[S^j_N]/N| \leq \epsilon$ for all $j=1..K$
if 
$\epsilon = \sqrt{1/(2N) \log{(2K/\delta)}}$ given $N$ and $\delta$
or 
$N \geq 1/(2\epsilon^2) \log{(2K/\delta)}$ given $\epsilon > 0$ and $\delta$.

Given a discrete probability distribution $P(\bX|\bY)$,
we can show the sample complexity of learning
$P$ in the tabular model-based RL setting \citep{agarwal2019reinforcement},
$N \geq {|\bX||\bY|}/{\epsilon^2} \cdot \log ({2 |\bY|}/{\delta}).$
For all vectors $\bY$,
we bound the error of the probability parameters by
\begin{align*}
&
P(\cup_{\by} |\hat{P}(X_i|\by) - P(X_i|\by)| \geq \epsilon)
\leq 
|\bY|
P(|\hat{P}(X_i|\by) - P(X_i|\by)| \geq \epsilon)
\leq
2|\bY| \exp(-2N \epsilon^2).
\end{align*}

For each $X_i$,
the number of samples is at least  $1/(2\epsilon^2) \log(2|\bY|/\delta)$,
and we need $\bX N$ samples for estimating all parameters is
$N \geq |\bX| |\bY|/\epsilon^2 \log(2|\bY)/\delta)$.
The desired results can be obtained by 
adjusting the sets $\bX$ and $\bY$ appropriately,
i.e., $\bX = \bS'_{K+1}$ and $\bY = \bS \cup \text{Eff}(\bA)$.
\end{proof}

\newpage
\subsection{Experiment Details}
We used a cluster environment with 2 to 4 CPUs per each run.
\subsubsection{2D Point-Mass Control}
\paragraph{Hyperparameters}
The following hyperparamters are chosen by grid search.
\begin{itemize}
    \item Adam optimizer learning rate: $1e^{-4}$
    \item discount: $0.99$
    \item target update frequency: 100 steps
    \item target network update rate ($\tau)$: $1.0$
    \item no-op fraction probability: $0.1$
    \item start $\epsilon$: 1.0
    \item end $\epsilon$: 0.1
    \item batch size: 128
    \item learning starts: 20 episodes
    \item model train data size: 200 episodes
\end{itemize}

\paragraph{Q-networks}
\begin{itemize}
\item AD-DQN:
\begin{verbatim}
class ADDQN(nn.Module):
    def __init__(self, state_size, action_size, num_bins):
        super().__init__()
        self.action_size = action_size
        self.num_bins = num_bins
        self.q = nn.Sequential(
            nn.Linear(state_size, 512),
            nn.ReLU(),
            nn.Linear(512, 512),
            nn.ReLU(),
            nn.Linear(512, num_bins + num_bins),
        )
        self.mixer = nn.Sequential(
            nn.Linear(self.action_size, 64),
            nn.ReLU(),
            nn.Linear(64, 1)
        )
        self.apply(self._init_weights)

    def forward(self, x, actions):
        actions = actions.reshape(-1, 2).to(torch.int64)
        x_mask = torch.nn.functional.one_hot(actions[:, 0], num_classes=self.num_bins)
        y_mask = torch.nn.functional.one_hot(actions[:, 1], num_classes=self.num_bins)
        action_mask = torch.cat([x_mask, y_mask], dim=1)
        z = self.q(x) 
        z = z * action_mask
        z = self.mixer(z) 
        return z
\end{verbatim}
\item DECQN:
\begin{verbatim}
class DecQNetwork(nn.Module):
    def __init__(self, state_size, action_size, num_bins):
        super().__init__()
        self.network = nn.Sequential(
            nn.Linear(state_size, 512),        
            nn.ReLU(),
            nn.Linear(512, 512),
            nn.ReLU(),
            nn.Linear(512, action_size),        
            Reshape(-1, action_size//num_bins, num_bins)
        )

    def forward(self, x):
        return self.network(x)
\end{verbatim}
\end{itemize}

\paragraph{Models}
\begin{itemize}
    \item Dynamics:
\begin{verbatim}
class DynamicsModelDelta2l(nn.Module):
    def __init__(self, state_size=2, output_size=1):
        super().__init__()
        self.network = nn.Sequential(
            nn.Linear(state_size, 64),
            nn.Linear(64, 64),
            nn.Linear(64, output_size)
        )
        self.output_size = output_size

    def forward(self, x):
        return self.network(x)

    def evaluate(self, x, prev, noise_variance=0.0001):
        with torch.no_grad():
            output = self(x)
            noise = np.random.normal(0, np.sqrt(noise_variance))
            noise = torch.tensor([noise], dtype=torch.float32).to(output.device)
            return prev + output*(1 + noise)    
\end{verbatim}
    \item Rewards:
\begin{verbatim}
class RewardModel(nn.Module):
    def __init__(self, state_size):
        super().__init__()
        self.network = nn.Sequential(
            nn.Linear(state_size*2 + 2, 64),
            nn.ReLU(),
            nn.Linear(64, 64),
            nn.ReLU(),
            nn.Linear(64, 64),
            nn.ReLU(),
            nn.Linear(64, 1)
        )

    def forward(self, x):
        return self.network(x)    
\end{verbatim}
\end{itemize}

\subsubsection{MIMIC-III Sepsis Treatment}
\paragraph{Formulation of RL Environment}
\begin{itemize}
    \item 
    The 
    sepsis treatment environment 
    \cite{komorowski2018artificial}
    was derived from MIMIC-III database version 1.4
    \cite{goldberger2000physiobank,johnson2016mimic}.
    \item
    The sepsis treatment reinforcement learning environment from MIMIC-III database
    was first formulated by \citet{komorowski2018artificial}.
    \citet{killian2020empirical} enhanced the formulation by 
    improving the representation learning.
    \citet{tang2021model} studied 
    an approach for offline RL evaluation for healthcare settings.
    \citet{tang2022leveraging} showed 
    overall evaluation protocol 
    that we follow
    using the open source environment offered by 
    \citet{tang2022leveraging}
    \footnote{\url{https://github.com/MLD3/OfflineRL_FactoredActions}}.    
\end{itemize}

\paragraph{RL Environment}
\begin{itemize}
    \item We used the code offered by \citet{killian2020empirical}
    with the open sourced projects to process 
    the MIMIC-III database
    \footnote{
        \url{https://github.com/MLforHealth/rl_representations/},
        \url{https://github.com/microsoft/mimic_sepsis}
    }.
    \item The original code was applied to MIMIC-III version 1.3
    and we see that applying the same code to the version 1.4
    results in slightly different statistics.
    \item \citet{tang2022leveraging}
    modified action space of \cite{komorowski2018artificial}
    and also modified the reward 
    that the mortality gets 0 reward. 
    We are reverting this change such that
    the reward is 100 for discharged patients,
    -100 for the mortality and 0 for all other patients.
    \item 
    We obtained a cohort of 177,877 patients and applied 70/15/15 split for training, validation, and testing.
    This split is  exactly the same as \cite{tang2022leveraging}
    since our baseline algorithms are BCQ algorithms in \cite{tang2022leveraging}
    \item Each patient has
    5 non-time varying demographic features and 33 time varying 
    features collected at 4-hour intervals over 72 hours that make up a discrete time-series of the maximum length 20.
    Some episodes are of length much less than 20
    if the episode was terminated either by recovery or mortality.
    The index 0 of the episode is the starting state and 
    the index 19 is the largest step number for the final state.
    \item The state encoding
    is done by the approximate information state encoder  (AIS) \cite{subramanian2022approximate},
    which gives 64 continuous state variables.
    \item The action space is 
    comprised of two continuous variables
    for the total volume of intravenous fluid and the amount of vassopressors treated.
    \item 
    We experimented on action spaces having different discretizations,
    ranging from 5x5, 10x10, 11x11, 13x13, and 14x14.
\end{itemize}

\paragraph{Action Spaces}
\citet{tang2022leveraging} modified the action space
from the original space by \citet{komorowski2018artificial}.
We follow the discretization by \citet{komorowski2018artificial},
by dividing the histogram below
with the equally spaced quantiles.
\begin{figure}[h!]
    \centering
    \includegraphics[width=0.7\textwidth]{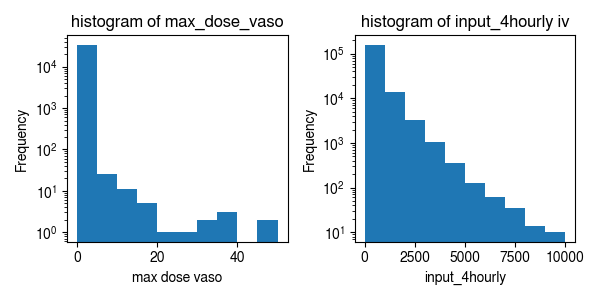}
    \label{fig:adjusted-vaso-iv-hist}
\end{figure}

For example, 
the bins for the action space of size 5 is
[0, 0.07, 0.2, 0.38, 50]
for the vassopressor dose, and
[0.02, 48.0, 150.0, 492.5, 9992.67]
for the IV fluid dose.
The binds for the action space of size 14
is 
[0.00, 0.03, 0.04, 
0.06, 0.09, 0.12, 
0.17, 0.20, 0.23, 0.30, 
0.45, 0.56, 0.90, 50]
for the vassopressor dose and
0.02, 20, 40, 40, 55.92, 80.08, 
120, 186.25, 270, 
380, 512.50, 728.98, 1110, 9992.67
for the IV fluids dose.    

Applying finer grained discretization
leads to larger action spaces as follows.
The $0$-th actions are the no-op action.

\begin{figure}[h!]
    \centering
    \begin{subfigure}[b]{0.33\textwidth}
      \includegraphics[width=1.0\textwidth]{action_space_5.png}
      \caption{Action space of 5x5 discretization}
    \end{subfigure}
    \begin{subfigure}[b]{0.33\textwidth}
      \includegraphics[width=1.0\textwidth]{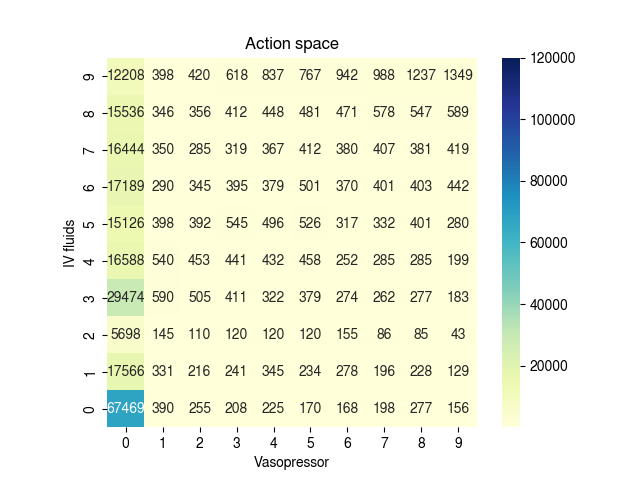}
      \caption{Action space of 10x10 discretization}
    \end{subfigure}
    \begin{subfigure}[b]{0.33\textwidth}
      \includegraphics[width=1.0\textwidth]{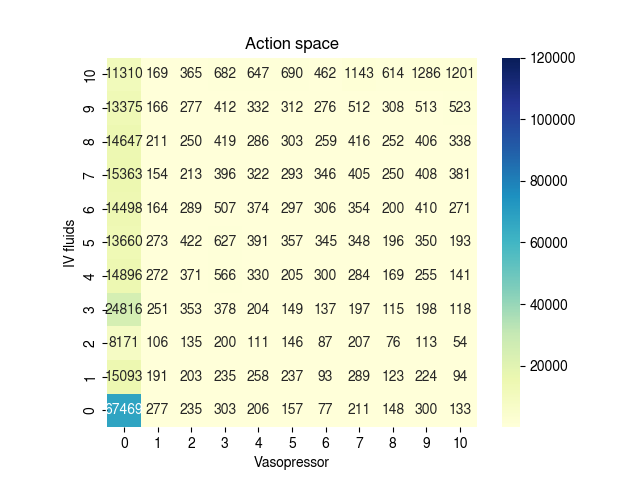}
      \caption{Action space of 11x11 discretization}
    \end{subfigure}
    \begin{subfigure}[b]{0.33\textwidth}
      \includegraphics[width=1.0\textwidth]{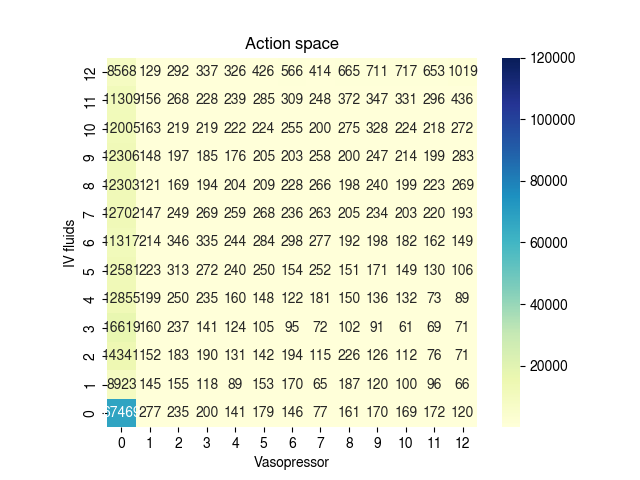}
      \caption{Action space of 13x13 discretization}
    \end{subfigure}
    \begin{subfigure}[b]{0.33\textwidth}
      \includegraphics[width=1.0\textwidth]{action_space_14.png}
      \caption{Action space of 14x14 discretization}
    \end{subfigure} 
    \caption{Discretized action spaces ranging from 5x5 to 14x14}
\end{figure}

\paragraph{AIS State Encoders}
Following the state encoder learning in \cite{tang2022leveraging},
we tried different state dimensions 32, 64, and 128
and different learning rates 0.0005, 0.0001, 0.005, and 0.001.
The best state dimension for all 5 action spaces is 64.
Therefore, the state space of the MDP is 64 dimensional continuous space.
For each configuration, we repeated on 20 different random seeds from 1 to 20,
and selected the best state encoder model showing the minimum validation loss.

\begin{figure}[h!]
    \centering
    \begin{subfigure}[b]{0.28\textwidth}
      \includegraphics[width=1.0\textwidth]{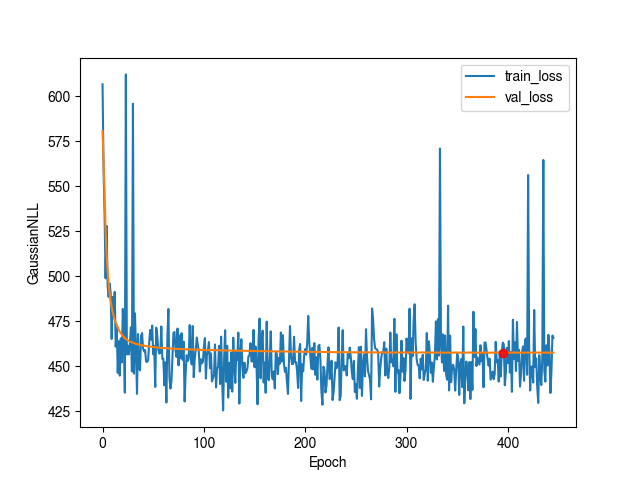}
      \caption{5x5 Loss}
    \end{subfigure}   
    \begin{subfigure}[b]{0.28\textwidth}
      \includegraphics[width=1.0\textwidth]{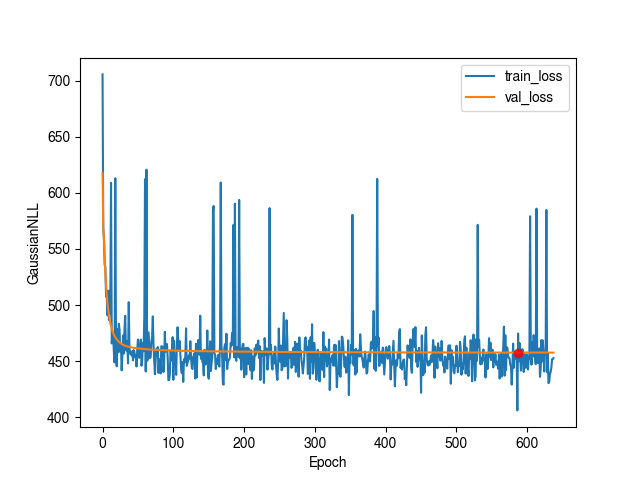}
      \caption{10x10 Loss}
    \end{subfigure}     
    \begin{subfigure}[b]{0.28\textwidth}
      \includegraphics[width=1.0\textwidth]{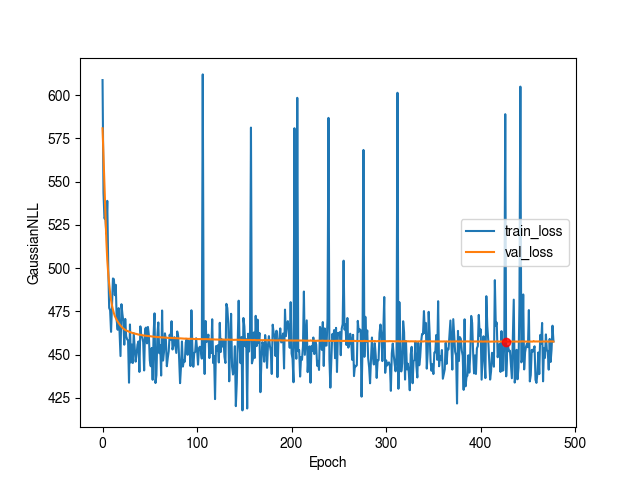}
      \caption{11x11 Loss}
    \end{subfigure}     
    \begin{subfigure}[b]{0.28\textwidth}
      \includegraphics[width=1.0\textwidth]{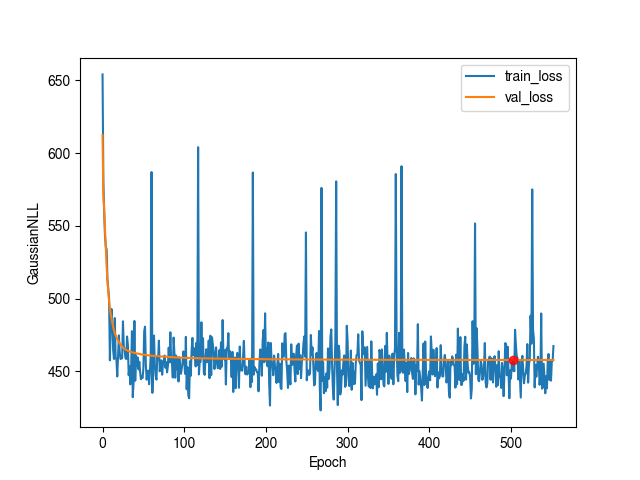}
      \caption{13x13 Loss}
    \end{subfigure}     
    \begin{subfigure}[b]{0.28\textwidth}
      \includegraphics[width=1.0\textwidth]{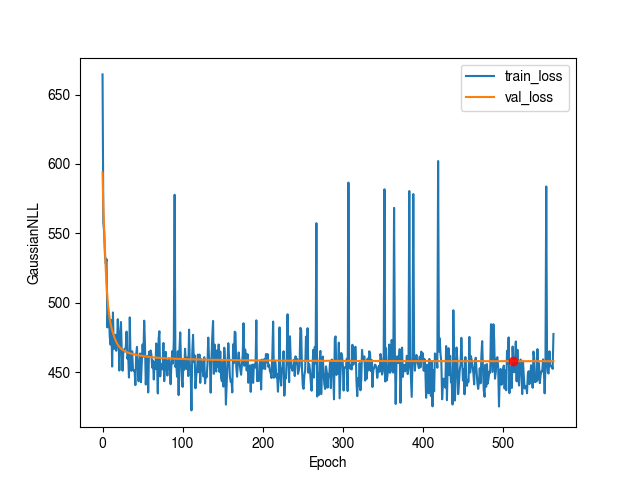}
      \caption{14x14 Loss}
    \end{subfigure}     
    \caption{
    The train and validation loss (Gaussian NLL) 
    for 5 different action spaces ranging from
    5x5 to 14x14.
    For each action space,
    we selected the state encoder model with the minimum
    validation loss
    and encoded the trajectory of all patients.
    }
\end{figure}

\begin{figure}[h!]
    \centering
    \begin{subfigure}[b]{0.28\textwidth}
      \includegraphics[width=1.0\textwidth]{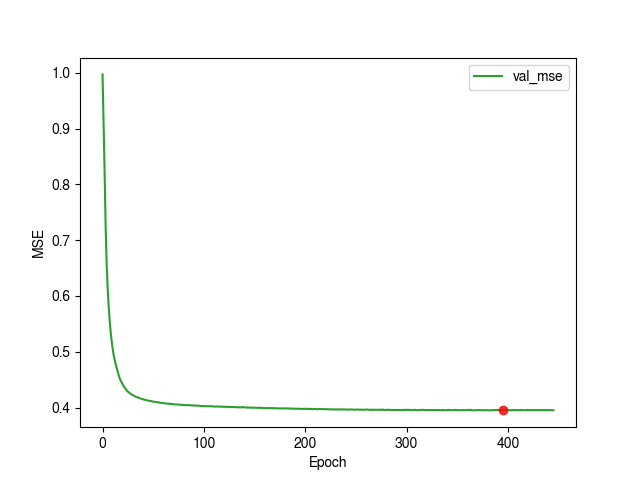}
      \caption{5x5 MSE}
    \end{subfigure}   
    \begin{subfigure}[b]{0.28\textwidth}
      \includegraphics[width=1.0\textwidth]{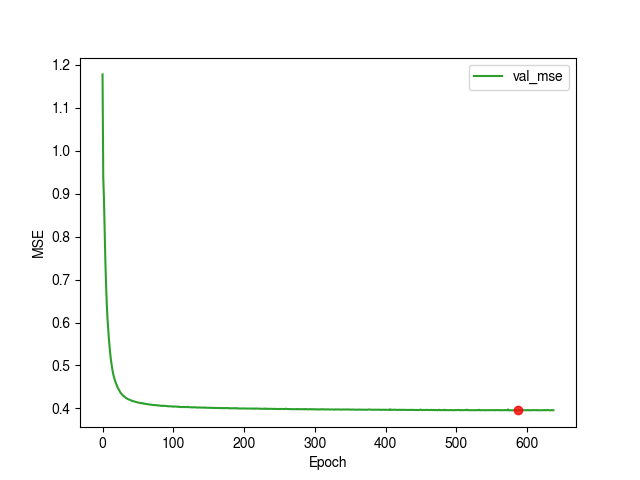}
      \caption{10x10 MSE}
    \end{subfigure}      
    \begin{subfigure}[b]{0.28\textwidth}
      \includegraphics[width=1.0\textwidth]{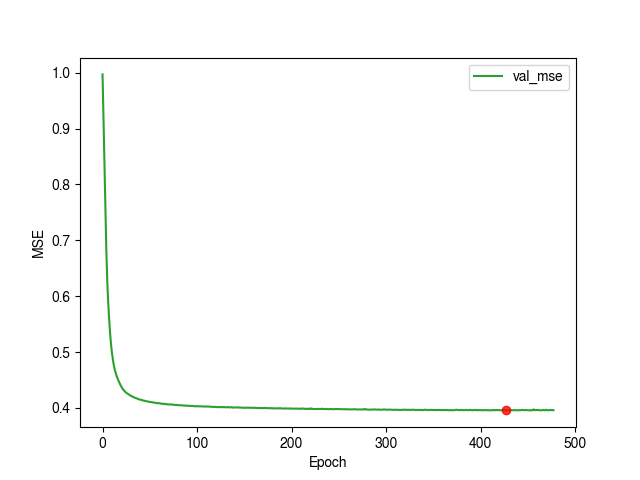}
      \caption{11x11 MSE}
    \end{subfigure}      
    \begin{subfigure}[b]{0.28\textwidth}
      \includegraphics[width=1.0\textwidth]{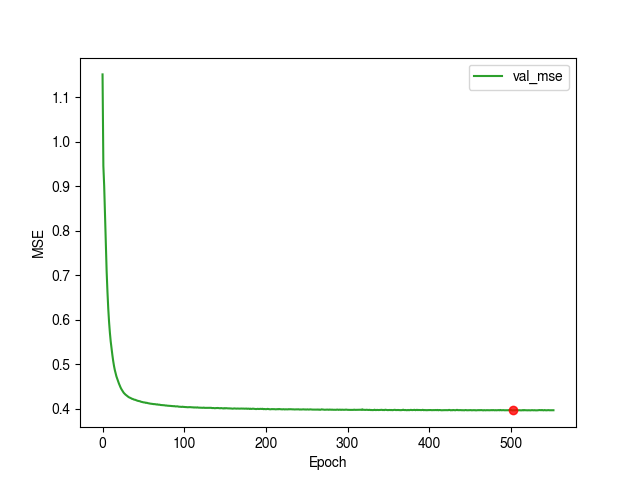}
      \caption{13x13 MSE}
    \end{subfigure}          
    \begin{subfigure}[b]{0.28\textwidth}
      \includegraphics[width=1.0\textwidth]{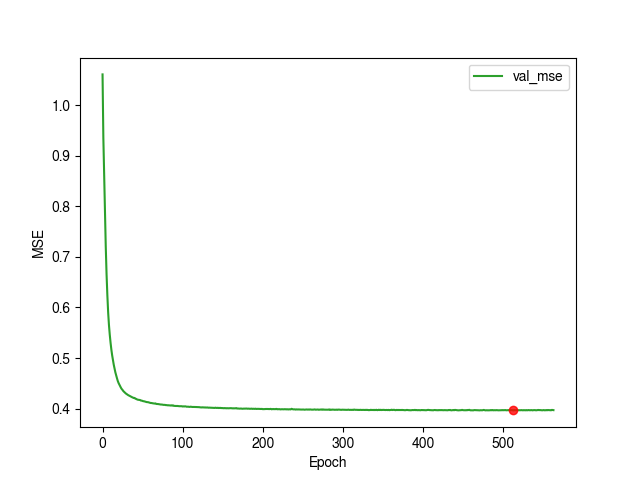}
      \caption{14x14 MSE}
    \end{subfigure}         
    \caption{
    The train and validation mean squared error (MSE)
    for 5 different action spaces ranging from
    5x5 to 14x14.
    }
\end{figure}
\newpage
\paragraph{Evaluation Methods: WIS and ESS}
Let $\pi$ be a policy under evaluation
and 
$\pi_b$ be a observed behavioral policy in the offline data.
Given an episode $(s_1, a_1, r_1, \ldots, s_L, a_L, r_L)$ of length $L$,
the per-step importance ratio
is 
$\rho_t = \frac{\pi(a_t | s_t)}{ \pi_b(a_t | s_t}$
and
the cumulative importance ratio 
is 
$\rho_{1:t} = \prod_{t'=1}^{t} \rho_{t'}$.
Given $m$ episodes,
the off-policy evaluation value from the weighted importance sampling can be 
computed as follows.
\begin{equation}
    \hat{V}_{\text{WIS}}(\pi)
    =
    \frac{1}{m}
    \sum_{j=1}^m
    \frac{
        \rho^{(j)}_{1:L^{(j)}}
    }
    {
        w_{L^{(j)}}
    }
    \Big(
        \sum_{t=1}^{L^{(j)}} \gamma^{t-1} r_t^{(j)}
    \Big),
\end{equation}
where
$w_t = \frac{1}{m}\sum_{j=1}^m \rho_{1:t}^{(j)}$
is the average cumulative importance ratio at time step $t$.
The superscript $(j)$ indicates that 
the length $L^{(j)}$, rewards $r_t^{(j)}$,
and the accumulated importance ratio $\rho_{1:t}^{(j)}$
are computed relative to the $j$-th episode.
The effective sample size of the importance sampling can be computed as follows \cite{martino2016alternative}.
\begin{equation}
    \text{ESS} = \frac{
    (\sum_{t=1}^N w_t)^2
    }{
    \sum_{t=1}^N w_t^2
    },
\end{equation}
where the weights are not normalized.
While computing the weights, we clip the values by $1,000$
and
softened $\pi$
by 
mixing a random policy,
namely
$\hat{\pi}(a|s) = \mathbb{I}_{a=\pi(s)} (1-\epsilon)
+ \mathbb{I}_{a\neq \pi(s)} (\frac{\epsilon}{|\cA|-1})
$ 
with $\epsilon=0.01$
following \cite{tang2021model}.

\begin{itemize}
    \item The offline dataset was split 
    into the train, validation, and test sets
    with 70, 15, and 15 proportions 
    (12989, 2779, and 2791 patients).
    \item Given the dataset, we 
    extract the observed policy (behavior policy)
    using the k nearest neighbor classifier
    implemented in \cite{scikit-learn},
    and we used the default parameter except for 
    $K=100$.
    The stochastic observed policy 
    can be found by 
    predicting the probability of selecting 
    each action in each state encoded by AIS.
    \item The observed values (the average discounted accumulated rewards) are
    87.74, 87.98, and 87,75 
    for the train, validation, and test set.
    Since this data set has only a single reward
    at the end of each episode
    and the discounting factor is 1.0 for the evaluation,
    the value is the average of the rewards of episodes.
    \item For all hyperparameters swept and random trials,
    we select the best model with the highest validation WIS
    given ESS cutoff value.
    Using the selected model,
    we evaluate its test WIS and test ESS for the final comparison.
\end{itemize}

\paragraph{BCQ hyperparameters}
\begin{itemize}
    \item Adam optimizer learning rate: $3e^{-4}$
    \item weight decay: $1e^{-3}$
    \item discount: $0.99$
    \item target update frequency: 1
    \item Q-learning learning rate ($\tau)$: $0.005$
    \item Q-learning target update: Polyak update
    \item BCQ thresholds: $[0.0, 0.01, 0.05, 0.1, 0.3, 0.5, 0.75, 0.9999]$
\end{itemize}

\paragraph{Q-networks}
\begin{itemize}
\item BCQ: state-dim=64, hidden-dim=128, action-dim=$[5^2, 10^2, 11^2, 13^2, 14^2]$
\begin{verbatim}
class BCQ_Net(nn.Module):
    def __init__(self, state_dim, action_dim, hidden_dim):
        super().__init__()
        self.q = nn.Sequential(
            nn.Linear(state_dim, hidden_dim),
            nn.ReLU(),
            nn.Linear(hidden_dim, hidden_dim),
            nn.ReLU(),
            nn.Linear(hidden_dim, action_dim),
        )
        self.pi_b = nn.Sequential(
            nn.Linear(state_dim, hidden_dim),
            nn.ReLU(),
            nn.Linear(hidden_dim, hidden_dim),
            nn.ReLU(),
            nn.Linear(hidden_dim, action_dim),
        )
    
    def forward(self, x):
        q_values = self.q(x)
        p_logits = self.pi_b(x)
        return q_values, F.log_softmax(p_logits, dim=1), p_logits        
\end{verbatim}
\item BCQ factored: state-dim=64, hidden-dim=128, action-dim=$[5\cdot 2, 10\cdot 2, 11\cdot 2, 13\cdot 2, 14\cdot 2]$
\begin{verbatim}
class BCQf_Net(nn.Module):
    def __init__(self, state_dim, action_dim, hidden_dim):
        super().__init__()
        self.q = nn.Sequential(
            nn.Linear(state_dim, hidden_dim),
            nn.ReLU(),
            nn.Linear(hidden_dim, hidden_dim),
            nn.ReLU(),
            nn.Linear(hidden_dim, action_dim), # vaso + iv
        )
        self.pi_b = nn.Sequential(
            nn.Linear(state_dim, hidden_dim),
            nn.ReLU(),
            nn.Linear(hidden_dim, hidden_dim),
            nn.ReLU(),
            nn.Linear(hidden_dim, action_dim), # vaso + iv
        )
    
    def forward(self, x):
        q_values = self.q(x)
        p_logits = self.pi_b(x)
        return q_values, F.log_softmax(p_logits, dim=-1), p_logits        
\end{verbatim}
\item Action Decomposed BCQ: state-dim=64, hidden-dim=128, action-dim=$[5, 10, 11, 13, 14]$
\begin{verbatim}
class BCQad_Net(nn.Module):
    def __init__(self, state_dim, action_dim, hidden_dim=64):
        super().__init__()

        self.q_embedding = nn.Sequential(
            nn.Linear(state_dim, hidden_dim),
            nn.ReLU(),
            nn.Linear(hidden_dim, hidden_dim),
            # nn.ReLU(),
            # nn.Linear(hidden_dim, hidden_dim),
        )
        self.Q1 = nn.Sequential(
            nn.Linear(hidden_dim, hidden_dim),
            nn.ReLU(),
            nn.Linear(hidden_dim, hidden_dim),
            nn.ReLU(),
            nn.Linear(hidden_dim, action_dim)
        )
        self.Q2 = nn.Sequential(
            nn.Linear(hidden_dim, hidden_dim),
            nn.ReLU(),
            nn.Linear(hidden_dim, hidden_dim),
            nn.ReLU(),
            nn.Linear(hidden_dim, action_dim)
        )
        self.Q = nn.Sequential(
            nn.Linear(action_dim + action_dim, hidden_dim),
            nn.ReLU(),
            nn.Linear(hidden_dim, hidden_dim),
            nn.ReLU(),
            nn.Linear(hidden_dim, action_dim+action_dim)
        )
        self.pi_embedding = nn.Sequential(
            nn.Linear(state_dim, hidden_dim),
            nn.ReLU(),
            nn.Linear(hidden_dim, hidden_dim),
            # nn.ReLU(),
            # nn.Linear(hidden_dim, hidden_dim),
        )
        self.pi1 = nn.Sequential(
            nn.Linear(hidden_dim, hidden_dim),
            nn.ReLU(),
            nn.Linear(hidden_dim, hidden_dim),
            nn.ReLU(),
            nn.Linear(hidden_dim, action_dim)
        )
        self.pi2 = nn.Sequential(
            nn.Linear(hidden_dim, hidden_dim),
            nn.ReLU(),
            nn.Linear(hidden_dim, hidden_dim),
            nn.ReLU(),
            nn.Linear(hidden_dim, action_dim)
        )
        self.pi = nn.Sequential(
            nn.Linear(action_dim + action_dim, hidden_dim),
            nn.ReLU(),
            nn.Linear(hidden_dim, hidden_dim),
            nn.ReLU(),
            nn.Linear(hidden_dim, action_dim+action_dim)
        )

    def forward(self, x, mode):
        if mode == "vaso":
            q = self.q_embedding(x)
            q_values = self.Q1(q)
            p = self.pi_embedding(x)
            p_logits = self.pi1(p)
            return q_values, F.log_softmax(p_logits, dim=-1), p_logits

        if mode == "iv":
            q = self.q_embedding(x)
            q_values = self.Q2(q)
            p = self.pi_embedding(x)
            p_logits = self.pi2(p)
            return q_values, F.log_softmax(p_logits, dim=-1), p_logits

        if mode == "mix":
            with torch.no_grad():
                q = self.q_embedding(x)
                q_values_1 = self.Q1(q)
                q_values_2 = self.Q2(q)
            q_values = self.Q(torch.cat([q_values_1, q_values_2], dim=1))

            with torch.no_grad():
                p = self.pi_embedding(x)
                p_logits1 = self.pi1(p)
                p_logits2 = self.pi2(p)
            p_logits = self.pi(torch.cat([p_logits1, p_logits2], dim=1))

            return q_values, F.log_softmax(p_logits, dim=-1), p_logits

        if mode == "eval":
            with torch.no_grad():
                q = self.q_embedding(x)
                q_values_1 = self.Q1(q)
                q_values_2 = self.Q2(q)
                q_values = self.Q(torch.cat([q_values_1, q_values_2], dim=1))

                p = self.pi_embedding(x)
                p_logits1 = self.pi1(p)
                p_logits2 = self.pi2(p)
                p_logits = self.pi(torch.cat([p_logits1, p_logits2], dim=1))
            return q_values, F.log_softmax(p_logits, dim=-1), p_logits    
\end{verbatim}
\end{itemize}
\newpage
\paragraph{Validation performance - BCQ threshold parameters}
\begin{figure}[b!]
    \centering
    \begin{subfigure}[b]{0.95\textwidth}
      \includegraphics[width=1.0\textwidth]{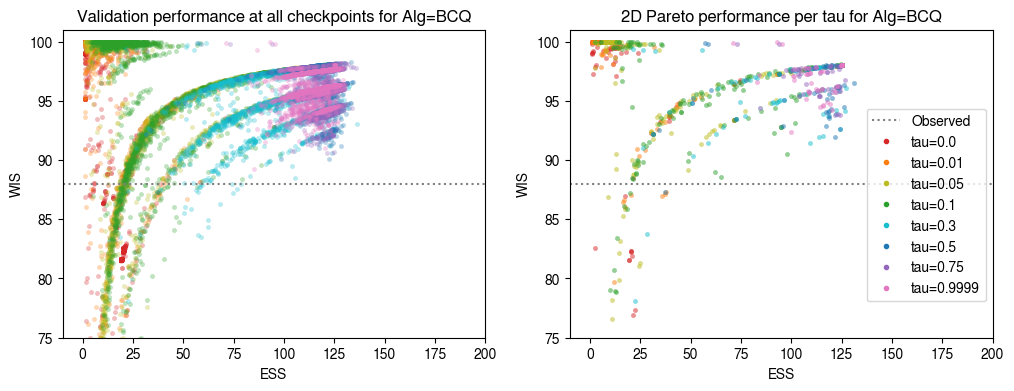}
      \caption{BCQ 5x5}
    \end{subfigure}   
    \begin{subfigure}[b]{0.95\textwidth}
      \includegraphics[width=1.0\textwidth]{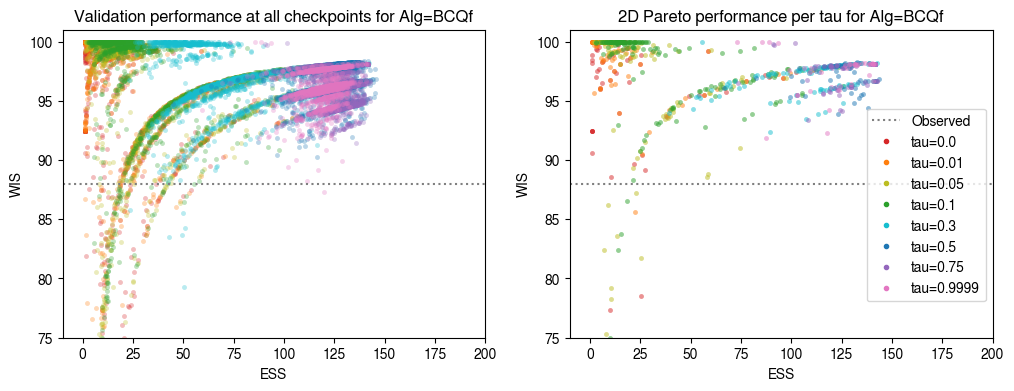}
      \caption{Factored BCQ 5x5}
    \end{subfigure}   
    \begin{subfigure}[b]{0.95\textwidth}
      \includegraphics[width=1.0\textwidth]{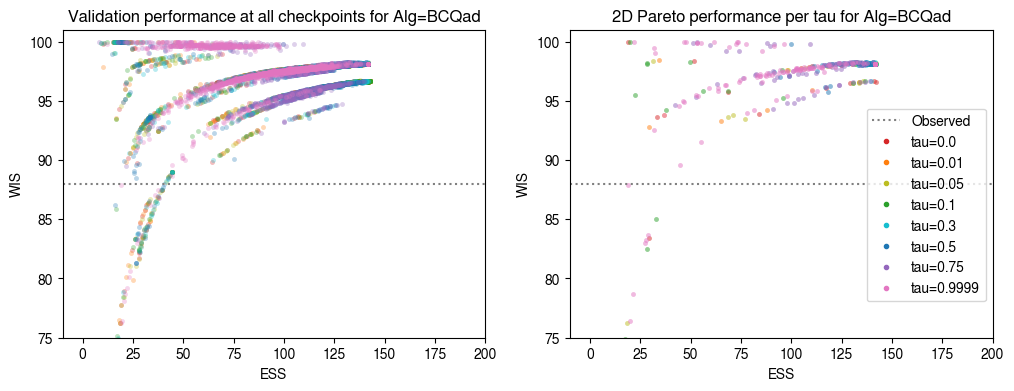}
      \caption{Action Decomposed BCQ 5x5}
    \end{subfigure}       
    \caption{Validation performance of action space 5x5 in terms of WIS and ESS for
    all BCQ threshold parameters and all checkpoints considered during model selection}
    \label{fig:tradeoff-5}
\end{figure}

\newpage
\begin{figure}[h!]
    \centering
    \begin{subfigure}[b]{0.99\textwidth}
      \includegraphics[width=1.0\textwidth]{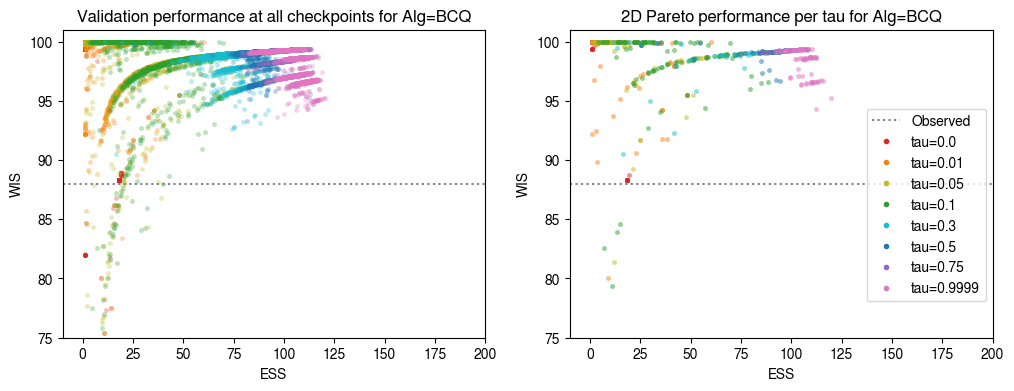}
      \caption{BCQ 10x10}
    \end{subfigure}   
    \begin{subfigure}[b]{0.99\textwidth}
      \includegraphics[width=1.0\textwidth]{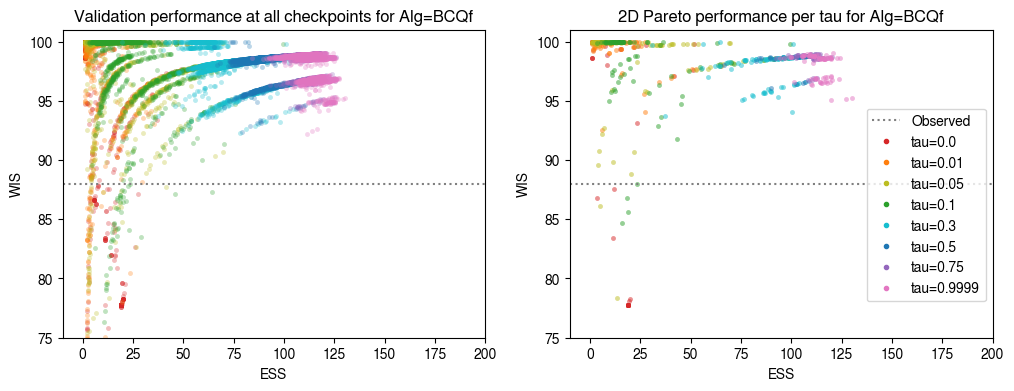}
      \caption{Factored BCQ 10x10}
    \end{subfigure}   
    \begin{subfigure}[b]{0.99\textwidth}
      \includegraphics[width=1.0\textwidth]{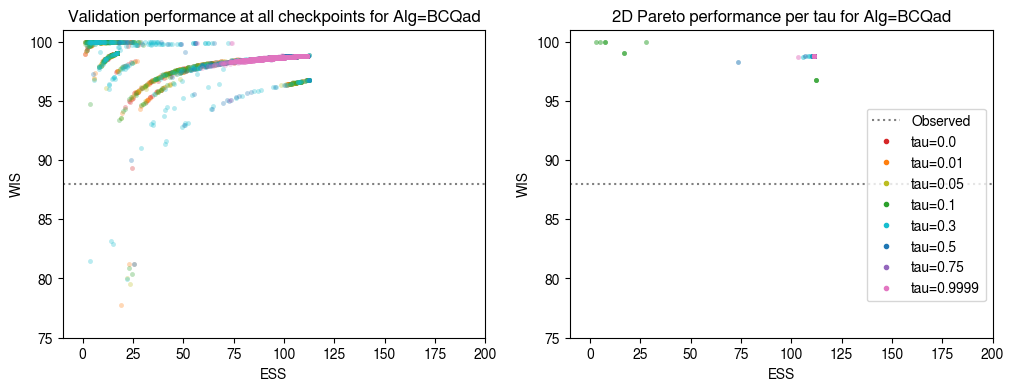}
      \caption{Action Decomposed BCQ 10x10}
    \end{subfigure}       
    \caption{Validation performance of action space 10x10 in terms of WIS and ESS for
    all BCQ threshold parameters and all checkpoints considered during model selection}
    \label{fig:tradeoff-10}
\end{figure}

\newpage
\begin{figure}[h!]
    \centering
    \begin{subfigure}[b]{0.99\textwidth}
      \includegraphics[width=1.0\textwidth]{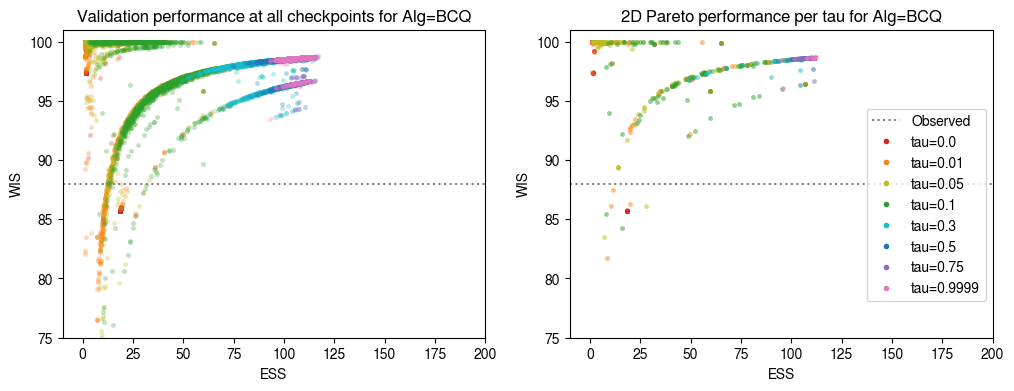}
      \caption{BCQ 11x11}
    \end{subfigure}   
    \begin{subfigure}[b]{0.99\textwidth}
      \includegraphics[width=1.0\textwidth]{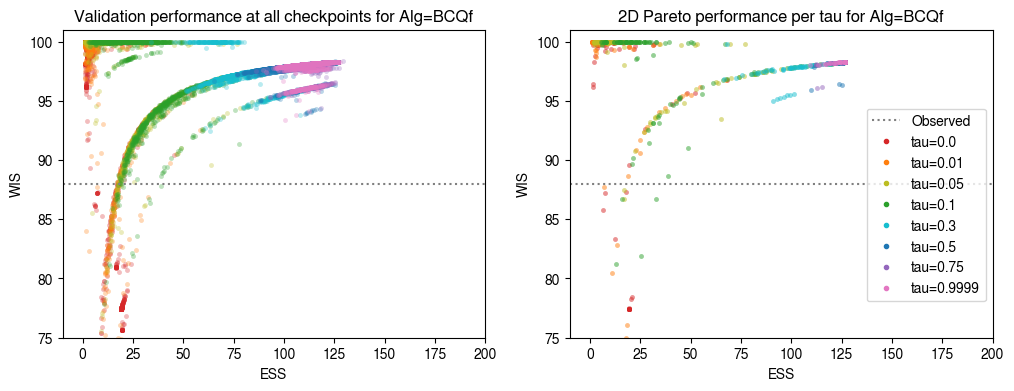}
      \caption{Factored BCQ 11x11}
    \end{subfigure}   
    \begin{subfigure}[b]{0.99\textwidth}
      \includegraphics[width=1.0\textwidth]{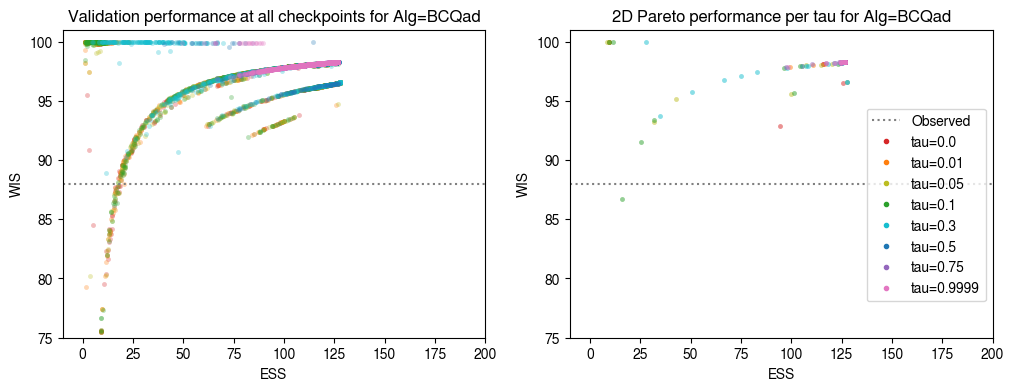}
      \caption{Action Decomposed BCQ 11x11}
    \end{subfigure}       
    \caption{Validation performance of action space 11x11 in terms of WIS and ESS for
    all BCQ threshold parameters and all checkpoints considered during model selection}
    \label{fig:tradeoff-11}
\end{figure}

\newpage
\begin{figure}[h!]
    \centering
    \begin{subfigure}[b]{0.99\textwidth}
      \includegraphics[width=1.0\textwidth]{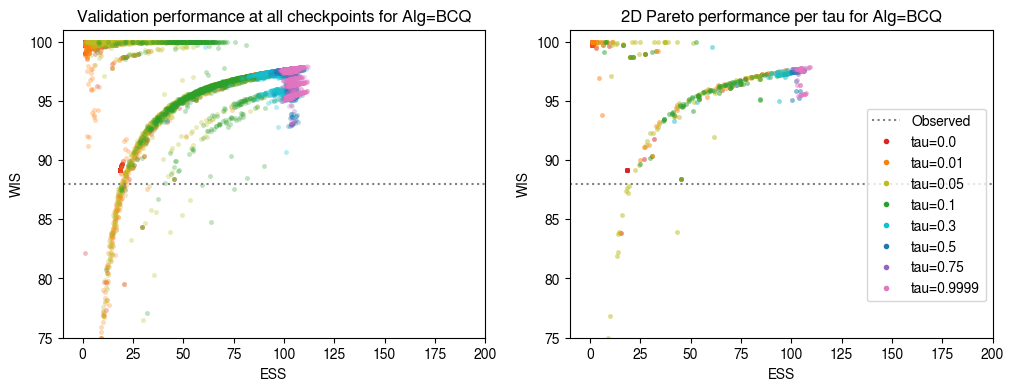}
      \caption{BCQ 13x13}
    \end{subfigure}   
    \begin{subfigure}[b]{0.99\textwidth}
      \includegraphics[width=1.0\textwidth]{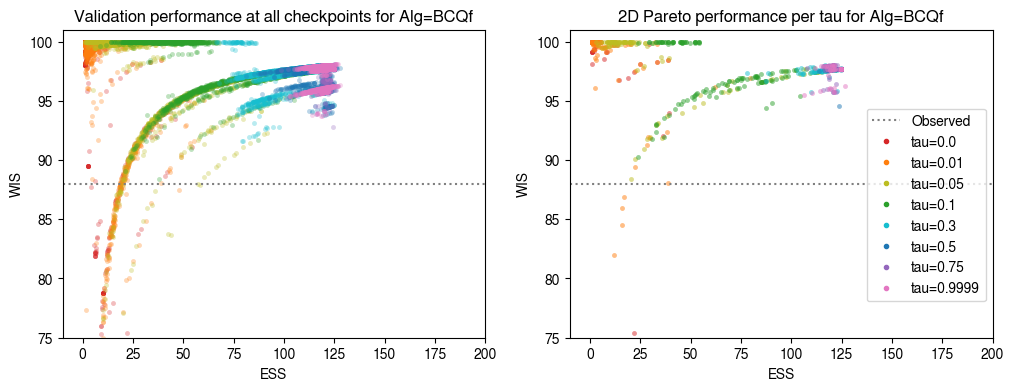}
      \caption{Factored BCQ 13x13}
    \end{subfigure}   
    \begin{subfigure}[b]{0.99\textwidth}
      \includegraphics[width=1.0\textwidth]{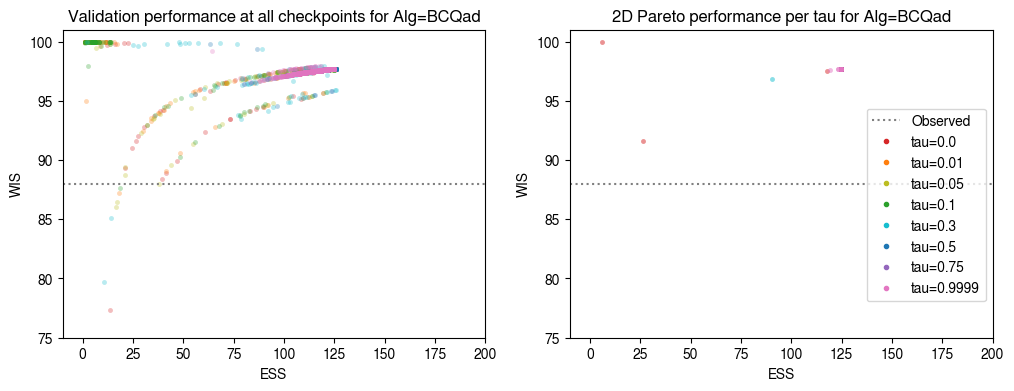}
      \caption{Action Decomposed BCQ 13x13}
    \end{subfigure}       
    \caption{Validation performance of action space 13x13 in terms of WIS and ESS for
    all BCQ threshold parameters and all checkpoints considered during model selection}
    \label{fig:tradeoff-13}
\end{figure}

\newpage
\begin{figure}[h!]
    \centering
    \begin{subfigure}[b]{0.99\textwidth}
      \includegraphics[width=1.0\textwidth]{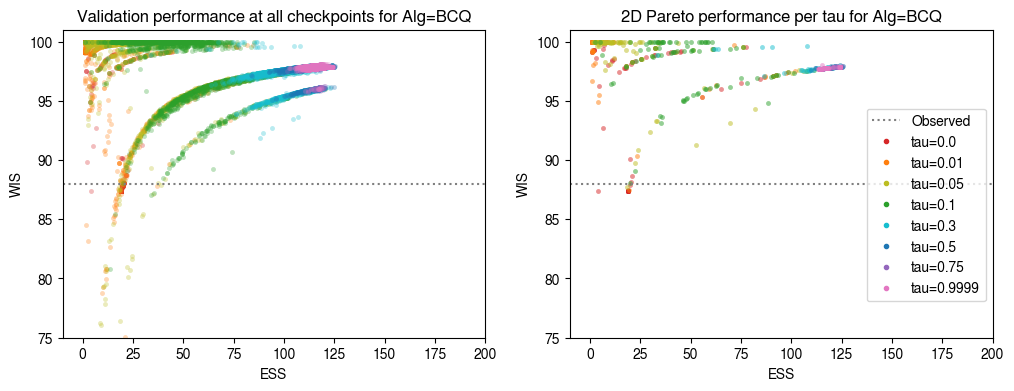}
      \caption{BCQ 14x14}
    \end{subfigure}   
    \begin{subfigure}[b]{0.99\textwidth}
      \includegraphics[width=1.0\textwidth]{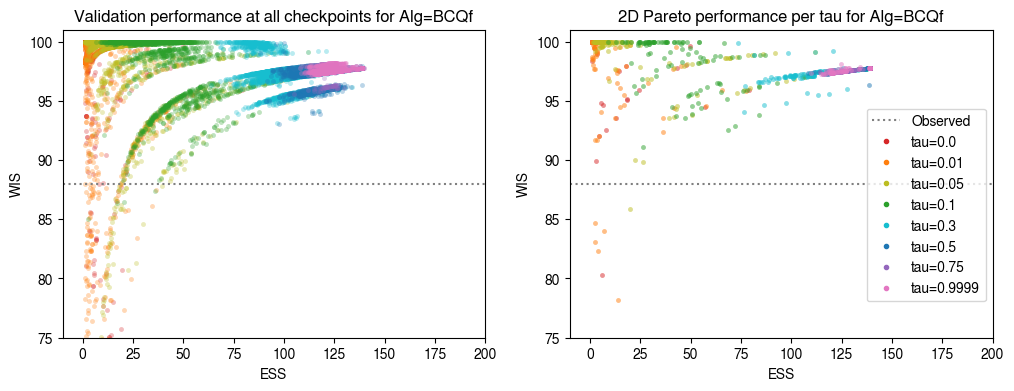}
      \caption{Factored BCQ 14x14}
    \end{subfigure}   
    \begin{subfigure}[b]{0.99\textwidth}
      \includegraphics[width=1.0\textwidth]{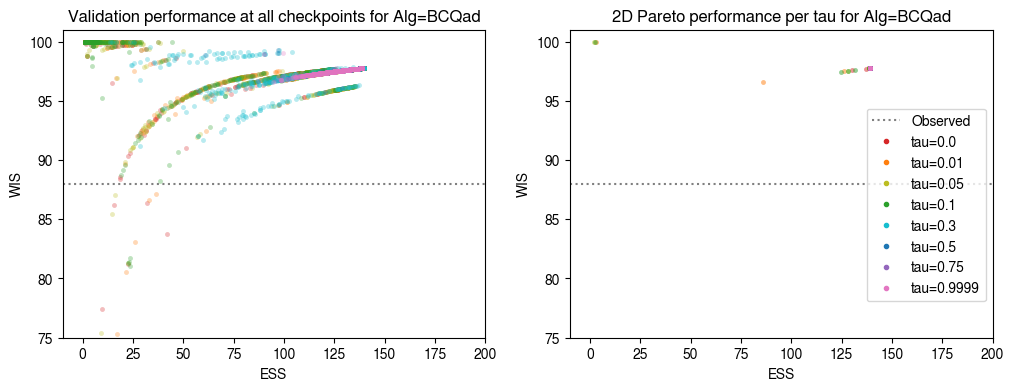}
      \caption{Action Decomposed BCQ 14x14}
    \end{subfigure}       
    \caption{Validation performance of action space 14x14 in terms of WIS and ESS for
    all BCQ threshold parameters and all checkpoints considered during model selection}
    \label{fig:tradeoff-14}
\end{figure}

\newpage
\paragraph{Model Selection with different minimum ESS cutoffs
evaluated on the test set
}
Models are selected such that it must have ESS higher than the cutoff in the validation set.
The following figures show the performance of the selected models
that exceeds the ESS cutoffs in the test set.

\begin{figure}[h!]
    \centering
    \begin{subfigure}[b]{0.43\textwidth}
      \includegraphics[width=1.0\textwidth]{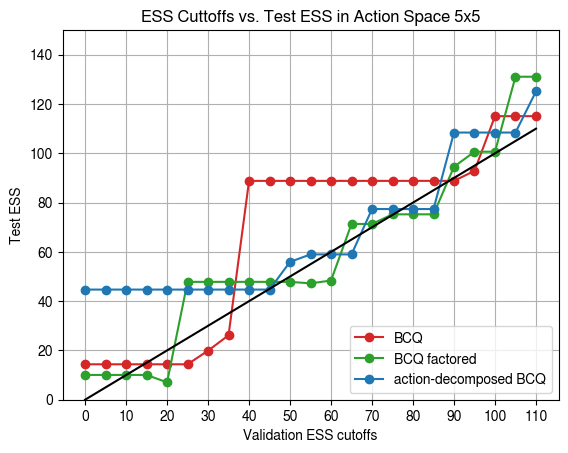}
      \caption{Action space of 5x5}
    \end{subfigure}
    \begin{subfigure}[b]{0.43\textwidth}
      \includegraphics[width=1.0\textwidth]{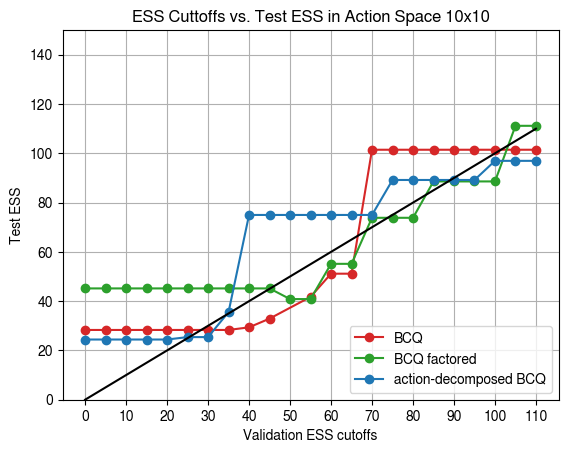}
      \caption{Action space of 10x10}
    \end{subfigure}
    \begin{subfigure}[b]{0.43\textwidth}
      \includegraphics[width=1.0\textwidth]{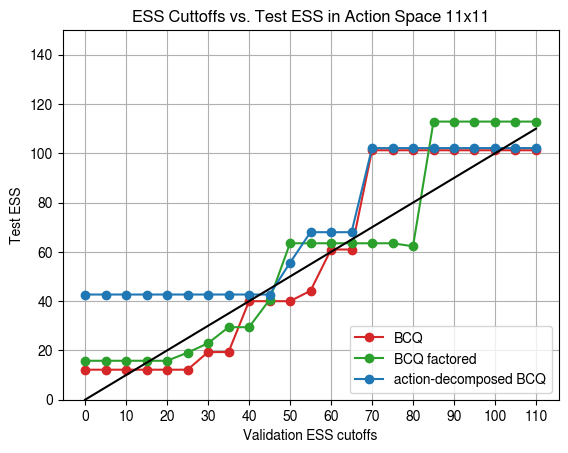}
      \caption{Action space of 11x11}
    \end{subfigure}
    \begin{subfigure}[b]{0.43\textwidth}
      \includegraphics[width=1.0\textwidth]{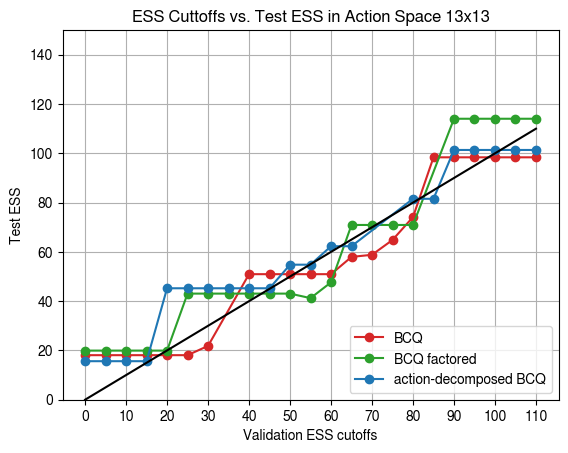}
      \caption{Action space of 13x13}
    \end{subfigure}
    \begin{subfigure}[b]{0.43\textwidth}
      \includegraphics[width=1.0\textwidth]{test-cutoff-ess-14.png}
      \caption{Action space of 14x14}
    \end{subfigure} 
    \caption{Model Selection Scores:
    the X-axis is minimum ESS cutoff values and
    the Y-axis is ESS computed in the test set
    using the selected model based on the highest validation WIS scores
    in action spaces ranging from 5x5 to 14x14}
\end{figure}

\newpage
\textbf{
Model Selection with different minimum ESS cutoffs
evaluated on the validation set}
Models are selected such that it must have ESS higher than the cutoff in the validation set.
The following figures show the performance of the selected models
that exceeds the ESS cutoffs in the validation set.
\begin{figure}[h!]
    \centering
    \begin{subfigure}[b]{0.43\textwidth}
      \includegraphics[width=1.0\textwidth]{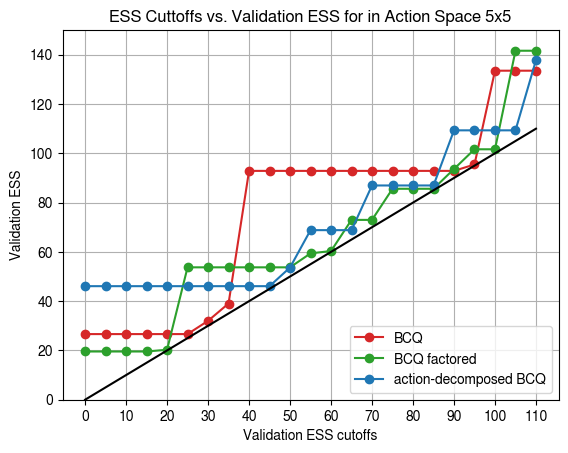}
      \caption{Action space of 5x5}
    \end{subfigure}
    \begin{subfigure}[b]{0.43\textwidth}
      \includegraphics[width=1.0\textwidth]{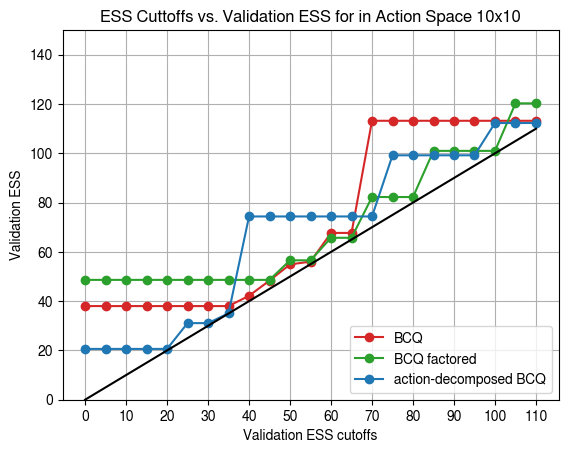}
      \caption{Action space of 10x10}
    \end{subfigure}
    \begin{subfigure}[b]{0.43\textwidth}
      \includegraphics[width=1.0\textwidth]{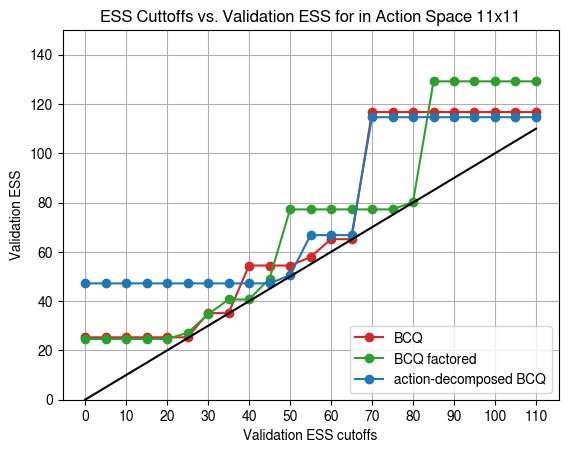}
      \caption{Action space of 11x11}
    \end{subfigure}
    \begin{subfigure}[b]{0.43\textwidth}
      \includegraphics[width=1.0\textwidth]{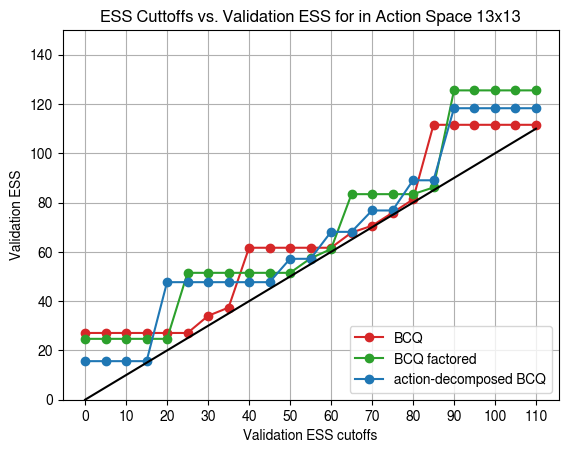}
      \caption{Action space of 13x13}
    \end{subfigure}
    \begin{subfigure}[b]{0.43\textwidth}
      \includegraphics[width=1.0\textwidth]{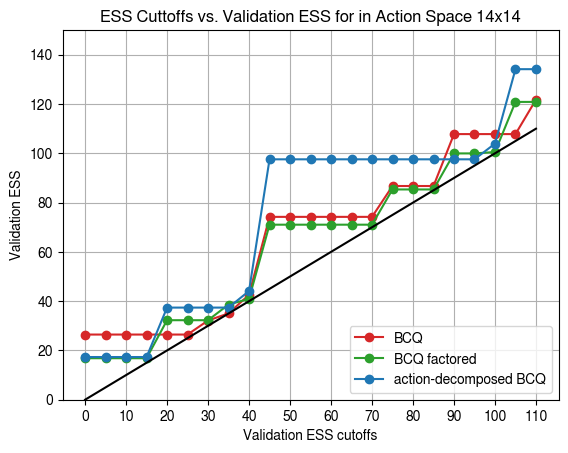}
      \caption{Action space of 14x14}
    \end{subfigure} 
    \caption{Model Selection Scores:
    the X-axis is minimum ESS cutoff values and
    the Y-axis is ESS computed in the validation set
    using the selected model based on the highest validation WIS scores
    in action spaces ranging from 5x5 to 14x14}
\end{figure}

\newpage
\paragraph{Performance Scores Evaluated on Test Set}
The blue curves from \texttt{AD-BCQ} dominates other two baselines.
\begin{figure}[h!]
    \centering
    \begin{subfigure}[b]{0.43\textwidth}
      \includegraphics[width=1.0\textwidth]{test-performance-5.png}
      \caption{Action space of 5x5}
    \end{subfigure}
    \begin{subfigure}[b]{0.43\textwidth}
      \includegraphics[width=1.0\textwidth]{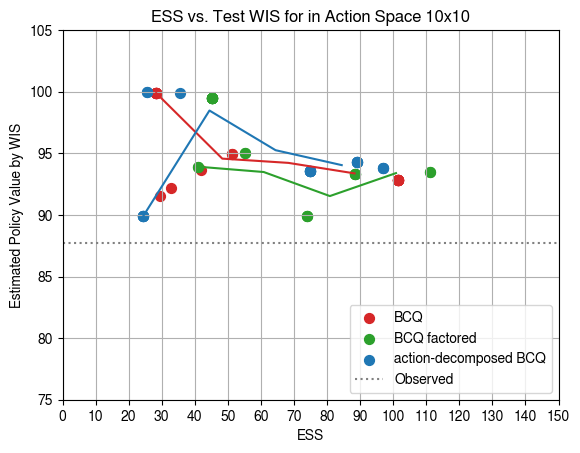}
      \caption{Action space of 10x10}
    \end{subfigure}
    \begin{subfigure}[b]{0.43\textwidth}
      \includegraphics[width=1.0\textwidth]{test-performance-11.png}
      \caption{Action space of 11x11}
    \end{subfigure}
    \begin{subfigure}[b]{0.43\textwidth}
      \includegraphics[width=1.0\textwidth]{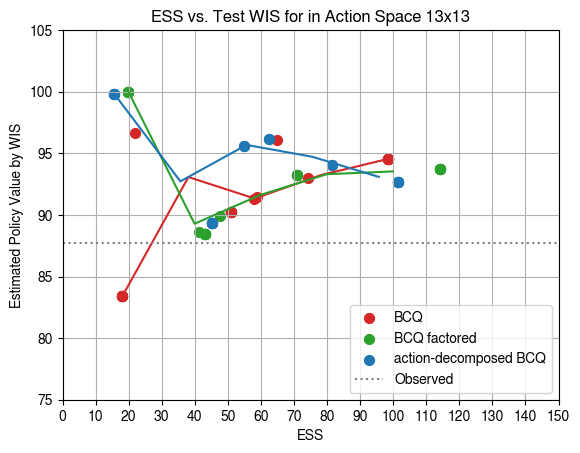}
      \caption{Action space of 13x13}
    \end{subfigure}
    \begin{subfigure}[b]{0.43\textwidth}
      \includegraphics[width=1.0\textwidth]{test-performance-14.png}
      \caption{Action space of 14x14}
    \end{subfigure} 
    \caption{Performance Scores of Selected Models:
    Each point represents the performance score of the selected model.
    the X-axis is ESS and the Y-axis is WIS.
    The figure shows the Pareto frontiers of the test performance
    in the action spaces ranging from 5x5 to 14x14}
\end{figure}
\end{document}